\newcommand{\w}{w}
\newcommand{\token}{v}
\newcommand{\inpseq}{D}
\newcommand{\inppre}{{\widetilde{D}}}
\newcommand{\target}{{\vec{y}}}
\newcommand{\outvec}{{\hat{v}}}
\newcommand{\seqdist}{{\mathcal{T}}}
\newcommand{\Gpre}{{G_\inppre}}
\newcommand{\ridgeD}{\hat{w}_\inppre}
\newcommand{\ridgeSkew}{\hat{w}_{\inppre, \Sigma}}
\newcommand{\uminNL}{\overline{u}_{\inppre}}
\newcommand{\Prob}{\mathbb{P}}
\newcommand{\E}{\mathbb{E}}
\newcommand{\R}{\mathbb{R}}
\newcommand{\calD}{\mathcal{D}}
\newcommand{\calN}{\mathcal{N}}
\newcommand{\calF}{\mathcal{F}}
\newcommand{\tr}{\text{tr}}
\newcommand{\matdim}{{\R^{(d + 1) \times (d + 1)}}}
\newcommand{\headdim}{\R^{d + 1}}
\newcommand{\nextlinespace}{\,\,\,\,\,\,\,\,\,\,\,\,\,\,\,\,}
\newcommand{\ddimId}{{I_{d \times d}}}
\newcommand{\normal}{\calN(0, \ddimId)}
\newcommand{\comma}{\,,}
\newcommand{\period}{\,.}
\newcommand{\colvec}[2]{\left[\begin{array}{c} #1 \\ #2 \end{array} \right]}
\newcommand{\key}{W_K}
\newcommand{\query}{W_Q}
\newcommand{\valueM}{W_V}
\newcommand{\head}{h}
\newcommand{\M}{M}
\newtheorem{theorem}{Theorem}
\newtheorem{lemma}{Lemma}
\newtheorem{assumption}{Assumption}
\title{One Step of Gradient Descent is Provably the Optimal In-Context Learner with One Layer of Linear Self-Attention}
\author{
  Arvind Mahankali \\
  Stanford University\\
  \texttt{amahanka@stanford.edu} \\
  \and
  Tatsunori B. Hashimoto \\
  Stanford University \\
  \texttt{thashim@stanford.edu} \\
  \and
  Tengyu Ma \\
  Stanford University \\
  \texttt{tengyuma@stanford.edu} \\
}
\begin{document}

\maketitle

\begin{abstract}
Recent works have empirically analyzed in-context learning and shown that transformers trained on synthetic linear regression tasks can learn to implement ridge regression, which is the Bayes-optimal predictor, given sufficient capacity \citep{akyurek2023_icl}, while one-layer transformers with linear self-attention and no MLP layer will learn to implement one step of gradient descent (GD) on a least-squares linear regression objective \citep{vonoswald2022transformers}. However, the theory behind these observations remains poorly understood. 
We theoretically study transformers with a single layer of linear self-attention, trained on synthetic noisy linear regression data. First, we mathematically show that when the covariates are drawn from a standard Gaussian distribution, the one-layer transformer which minimizes the pre-training loss will implement a single step of GD on the least-squares linear regression objective. Then, we find that changing the distribution of the covariates and weight vector to a non-isotropic Gaussian distribution has a strong impact on the learned algorithm: the global minimizer of the pre-training loss now implements a single step of \textit{pre-conditioned} GD. However, if only the distribution of the responses is changed, then this does not have a large effect on the learned algorithm: even when the response comes from a more general family of \textit{nonlinear} functions, the global minimizer of the pre-training loss still implements a single step of GD on a least-squares linear regression objective.
\end{abstract}

\section{Introduction}

Large language models (LLMs) demonstrate the surprising ability of in-context learning, where an LLM ``learns'' to solve a task by conditioning on a prompt containing input-output exemplars \citep{brown2020_gpt3, lieber_jurassic, radford2019_gpt2, wang2021_gptj6b}. Recent works have advanced the understanding of in-context learning via empirical analysis \citep{min2022_rethinking, wei2023_larger, akyurek2023_icl, vonoswald2022transformers, dai2023_gpt}, but theoretical analysis remains limited \citep{xie2022_bayes}.

A recent line of work \citep{garg2022_iclregression, akyurek2023_icl, vonoswald2022transformers, dai2023_gpt} empirically finds that transformers can be trained to implement algorithms that solve linear regression problems in-context. Specifically, in each input sequence the transformer is given a set of in-context examples $(x_i, y_i)$, where $y_i = \w^\top x_i + \epsilon_i$ with a shared and hidden random coefficient vector $\w$ and random noise $\epsilon_i$, and a test example $x$.\footnotemark \footnotetext{In some settings in these works, the noise is set to $0$.} The transformer is then trained to predict $y = \w^\top x + \epsilon$, where $\epsilon$ denotes random noise from the same distribution as $\epsilon_i$. These works find that the transformer outputs a prediction $\hat{y}$ which is similar to the predictions of existing, interpretable linear regression algorithms, such as gradient descent (GD) or ordinary least squares, applied to the dataset consisting of the pairs $(x_i, y_i)$. In particular, \citet{vonoswald2022transformers} empirically show that a one-layer transformer with linear self-attention and no MLP layer will implement a single step of gradient descent when trained on such a distribution.

Several works (e.g. \citet{akyurek2023_icl, liu2023automata, giannou2023looped}) theoretically study the expressive power of transformers. In the context of linear regression tasks, \citet{akyurek2023_icl} describe how transformers can represent gradient descent, or Sherman-Morrison updates, and \citet{giannou2023looped} describe how transformers can represent Newton's algorithm for matrix inversion. However, in addition to the expressive power of transformers, it is also of interest to understand the behavior of transformers trained with gradient-based algorithms. Furthermore, it is still useful to understand the behavior of models with restricted capacity---though practical LLMs are very expressive, they need to perform many tasks simultaneously, and therefore the capacity per problem may still be relatively limited. Thus, motivated by \citet{vonoswald2022transformers}, we theoretically study the global minima of the pre-training loss for one-layer transformers with linear self-attention on the linear regression data distribution described above.

\textbf{Contributions.} In this paper, we study transformers with one linear self-attention layer, and mathematically investigate which algorithms the transformers implement for synthetically generated linear regression datasets. We prove that the transformer which implements a single step of gradient descent on a least squares linear regression objective is the global minimizer of the pre-training loss. This exactly matches the empirical findings of \citet{vonoswald2022transformers}.

Concretely, we consider a setup similar to \citet{vonoswald2022transformers, akyurek2023_icl}. The model we study is a transformer with one linear single-head self-attention layer, which is the same model as the one empirically studied by \citet{vonoswald2022transformers}. The training data for this transformer consist of sequences of the form $(x_1, y_1, \ldots, x_n, y_n)$, where the $x_i$ are sampled from $\normal$ and $y_i = \w^\top x_i + \epsilon_i$, where $\w$ is sampled from $\normal$ once per sequence, and the $\epsilon_i$ are i.i.d. Gaussian noise with variance $\sigma^2$. The pre-training loss is the expected error that the transformer achieves when predicting $y = \w^\top x$ given the test example $x$ and the context $(x_1, y_1, \ldots, x_n, y_n)$, i.e. the pre-training loss is $L = \E_{(x_1, y_1, \ldots, x_n, y_n), x, y}[(y - \hat{y})^2]$, where $\hat{y}$ is the output of the transformer given $(x_1, y_1, \ldots, x_n, y_n)$ and $x$ as input.

We show in \Cref{sec:main_result_linear} that the transformer which is the global minimizer of the pre-training loss $L$ implements one step of gradient descent on a linear regression objective with the dataset consisting of the $(x_i, y_i)$. More concretely, the transformer implements the prediction algorithm
\begin{align} \label{eq:transformer_learned_algorithm}
\hat{y}
& = \eta \sum_{i = 1}^n y_i x_i^\top x \period
\end{align}
However, one step of GD is also preferred in part due to the distribution of the $x_i$. In particular, if the covariance of $x_i$ is no longer the identity matrix, we show (\Cref{sec:different_cov}) that the global minimum of the pre-training loss corresponds to one step of GD, but with pre-conditioning.

Moreover, interestingly, our theory also suggests that the distribution of $y_i \vert x_i$ does not play such a significant role in the algorithm learned by the transformer. In \Cref{sec:nonlinear}, we study a setting where $y_i \vert x_i$ is \textit{nonlinear}, but satisfies some mild assumptions, such as invariance to rotations of the distribution of the $x_i$. As a concrete special case, the target function can be a neural network with any depth/width and i.i.d. random Gaussian weights. We show in \Cref{sec:nonlinear} that a one-layer transformer with linear self-attention, which minimizes the pre-training loss, still implements one step of GD on a \textbf{linear regression} objective. Intuitively, this is likely because of the constraint imposed by the architecture, which prevents the transformer from making use of any more complex structure in the $y_i$.

\textbf{Concurrent Works.} We discuss the closely related works of \citet{ahn2023transformers} and \citet{zhang2023trained} which are concurrent with and independent with our work and were posted prior to our work on arXiv. The work of \citet{ahn2023transformers} gives theoretical results very similar to ours. They study one-layer transformers with linear self-attention with the same parameterization as \citet{vonoswald2022transformers}, and show that with isotropic $x_i$, the global minimizer of the pre-training loss corresponds to one step of gradient descent on a linear model. They also show that for more general covariance matrices, the global minimizer of the pre-training loss corresponds to one step of pre-conditioned gradient descent, where the pre-conditioner matrix can be computed in terms of the covariance of $x_i$.\footnotemark\footnotetext{This result is not exactly the same as our result in \Cref{sec:different_cov}, since we assume $\w \sim \calN(0, \Sigma^{-1})$ while they assume $\w \sim \calN(0, \ddimId)$.} %

Different from our work, \citet{ahn2023transformers} also show additional results for multi-layer transformers (with linear self-attention) with residual connections trained on linear regression data. First, they study a restricted parameterization where in each layer, the product of the projection and value matrices has only one nonzero entry. In this setting, for two-layer transformers with linear self-attention, they show that the global minimizer corresponds to 
two steps of pre-conditioned GD with diagonal pre-conditioner matrices, when the data is isotropic. For linear transformers with $k$ layers, they show that $k$ steps of pre-conditioned GD corresponds to a critical point of the pre-training loss,\footnotemark\footnotetext{One technical point is that they show there exist transformers representing this form of pre-conditioned GD having arbitrarily small gradient, but not that there exists a transformer with gradient exactly $0$ which represents this form of pre-conditioned GD.} where the pre-conditioner matrix is the inverse of the covariance matrix of the $x_i$.\footnotemark\footnotetext{Here, they assume that $x_i \sim \calN(0, \Sigma)$ and $\w \sim \calN(0, \Sigma^{-1})$, which is the same assumption as our result in \Cref{sec:different_cov} but different from their result for one-layer transformers where $x_i \sim \calN(0, \Sigma)$.} Next, they study a less restrictive parameterization where the product of the projection and value matrices can be almost fully dense, and show that a certain critical point of the pre-training loss for $k$-layer linear transformers corresponds to $k$ steps of a generalized version of the GD++ algorithm, which was empirically observed by \citet{vonoswald2022transformers} to be the algorithm learned by $k$-layer linear transformers.

\citet{zhang2023trained} also theoretically study a setting similar to ours. They not only show that the global minimizer of the pre-training loss implements one step of GD (the same result as ours), but also show that a one-layer linear transformer trained with gradient flow will converge to a global minimizer. They also show that the transformer implements a step of pre-conditioned GD when the $x_i$ are non-isotropic. They also characterize how the training prompt lengths and test prompt length affect the test-time prediction error of the trained transformer. Additionally, they consider the behavior of the trained transformer under distribution shifts, as well as the training dynamics when the covariance matrices of the $x_i$ in different training prompts can be different.

One additional contribution of our work is that we also consider the case where the target function in the pre-training data is not a linear function (\Cref{sec:nonlinear}). This suggests that, compared to the distribution of the covariates, the distribution of the responses at training time does not have as strong of an effect on the algorithm learned by the transformer. We note that our proof in this setting is not too different from our proof in \Cref{sec:main_result_linear}. \citet{zhang2023trained} consider the case where the $y_i$'s in the test time prompt are obtained from a nonlinear target function, and consider the performance on this prompt of the transformer trained on prompts with a linear target function --- this is different from our setting in \Cref{sec:nonlinear} since we consider the case where the training prompts themselves are obtained with a nonlinear target function.

\section{Setup} \label{sec:setup}

Our setup is similar to \citet{vonoswald2022transformers}.

\textbf{One-Layer Transformer with Linear Self-Attention.} A linear self-attention layer with width $s$ consists of the following parameters: a key matrix $\key \in \R^{s \times s}$, a query matrix $\query \in \R^{s \times s}$, and a value matrix $\valueM \in \R^{s \times s}$. Given a sequence of $T > 1$ tokens $(\token_1, \token_2, \ldots, \token_T)$, the output of the linear self-attention layer is defined to be $(\outvec_1, \outvec_2, \ldots, \outvec_T)$, where for $i \in [T]$ with $i > 1$,
\begin{align}
\outvec_i
& = \sum_{j = 1}^{i - 1} (\valueM \token_j) (\token_j^\top \key^\top \query \token_i) \comma
\end{align}
and $\outvec_1 = 0$. In particular, the output on the $T^{\textup{th}}$ token is
\begin{align}
\outvec_T
& = \sum_{j = 1}^{T - 1} (\valueM \token_j)(\token_j^\top \key^\top \query \token_T) \period
\end{align}
As in the theoretical construction of \citet{vonoswald2022transformers}, we do not consider the attention score between a token $\token_i$ and itself. Our overall transformer is then defined to be a linear self-attention layer with key matrix $\key$, query matrix $\query$, and value matrix $\valueM$, together with a linear head $\head \in \R^s$ which is applied to the last token. Thus, the final output of the transformer is $\head^\top \outvec_T$. We will later instantiate this one-layer transformer with $s = d + 1$, where $d$ is the dimension of the inputs $x_i$. We note that this corresponds to a single head of linear self-attention, while one could also consider multi-head self-attention.

\textbf{Linear Regression Data Distribution.} The pretraining data distribution consists of sequences $\inpseq = (x_1, y_1, \ldots, x_{n + 1}, y_{n + 1})$. Here, the \textit{exemplars} $x_i$ are sampled i.i.d. from $\normal$. Then, a weight vector $\w \in \R^d$ is sampled from $\normal$, freshly for each sequence. Finally, $y_i$ is computed as $y_i = \w^\top x_i + \epsilon_i$ where $\epsilon_i \sim \calN(0, \sigma^2)$ for some $\sigma > 0$. We consider the vector $\token_i = \colvec{x_i}{y_i} \in \R^{d + 1}$ to be a \textit{token} --- in other words, the sequence $(x_1, y_1, \ldots, x_{n + 1}, y_{n + 1})$ is considered to have $n + 1$ tokens (rather than $2(n + 1)$ tokens). We use $\seqdist$ to denote the distribution of sequences defined in this way.

At both training and test time, $(x_1, y_1, \ldots, x_n, y_n, x_{n + 1}, y_{n + 1})$ is generated according to the pretraining distribution $\seqdist$, i.e. the $x_i$ are sampled i.i.d. from $\normal$, a new weight vector $\w \in \R^d$ is also sampled from $\normal$, and $y_i = \w^\top x_i + \epsilon_i$ where the $\epsilon_i$ are sampled i.i.d. from $\calN(0, \sigma^2)$. Then, the in-context learner is presented with $x_1, y_1, \ldots, x_n, y_n, x_{n + 1}$, and must predict $y_{n + 1}$. We refer to $x_1, \ldots, x_n$ as the \textit{support} exemplars and $x_{n + 1}$ as the \textit{query} exemplar. Here, $\token_1, \ldots, \token_n$ are defined as above, but $\token_{n + 1} = \colvec{x_i}{0}$, following the notation of \citet{vonoswald2022transformers}.\footnotemark\footnotetext{If we were to treat $x_i$ and $y_i$ as separate tokens, then we would need to deal with attention scores between $y_i$ and $y_j$ for $i \neq j$, as well as attention scores between $y_i$ and $x_j$ for $i \neq j$. Our current setup simplifies the analysis.} We note that this is not significantly different from the standard in-context learning setting, since even though the final token $\token_{n + 1}$ has $0$ as an extra coordinate, it does not provide the transformer with any additional information about $y_{n + 1}$.

\textbf{Loss Function.} Given a one-layer transformer with linear self-attention and width $d + 1$, with key matrix $\key \in \matdim$, query matrix $\query \in \matdim$, and value matrix $\valueM \in \matdim$, and with a head $\head \in \headdim$, the loss of this transformer on our linear regression data distribution is formally defined as
\begin{align}
L(\key, \query, \valueM, \head)
& = \E_{\inpseq \sim \seqdist} [(\head^\top \outvec_{n + 1} - y_{n + 1})^2] \comma
\end{align}
where as defined above, $\outvec_{n + 1}$ is the output of the linear self-attention layer on the $(n + 1)^{\textup{th}}$ token, which in this case is $\colvec{x_{n + 1}}{0}$.

We now rewrite the loss function and one-layer transformer in a more convenient form. As a convenient shorthand, for any test-time sequence $\inpseq = (x_1, y_1, \ldots, x_{n + 1}, 0)$, we write $\inppre = (x_1, y_1, \ldots, x_n ,y_n)$, i.e. the prefix of $\inpseq$ that does not include $x_{n + 1}$ and $y_{n + 1}$. We also define
\begin{align}
\Gpre
& = \sum_{i = 1}^n \colvec{x_i}{y_i} \colvec{x_i}{y_i}^\top \period
\end{align}
With this notation, we can write the prediction obtained from the transformer on the final token as
\begin{align}
\hat{y}_{n + 1}
& = \head^\top \valueM \Gpre \key^\top \query \token_{n + 1} \period
\end{align}
where $\token_{n + 1} = \colvec{x_{n + 1}}{0}$. Additionally, we also define the matrix $X \in \R^{n \times d}$ as the matrix whose $i^{th}$ row is the row vector $x_i^\top$, i.e.
\begin{align}
X
& = 
\left[\begin{array}{ccc} \cdots & x_1^\top & \cdots \\ \cdots & x_2^\top & \cdots \\ \vdots & \vdots & \vdots \\ \cdots & x_n^\top & \cdots \end{array} \right] \comma
\end{align}
and we define the vector $\target \in \R^n$ as the vector whose $i^{\textup{th}}$ entry is $y_i$, i.e.
\begin{align}
\target 
& = 
\left[\begin{array}{c}
     y_1  \\
     y_2 \\
     \vdots \\
     y_n
\end{array}\right] \period
\end{align}
Finally, it is worth noting that we can write the loss function as
\begin{align}
L(\key, \query, \valueM, \head)
& = \E_{\inpseq \sim \seqdist}[(\head^\top \valueM \Gpre \key^\top \query \token_{n + 1} - y_{n + 1})^2] \period
\end{align}
Thus, for $\w \in \R^{d + 1}$ and $M \in \matdim$, if we define
\begin{align}
L(\w, \M) 
& = \E_{\inpseq \sim \seqdist} [(\w^\top \Gpre M \token_{n + 1} - y_{n + 1})^2] \comma
\end{align}
then $L(\valueM^\top \head, \key^\top \query) = L(\key, \query, \valueM, \head)$. Note that we have a slight abuse of notation, and $L$ has two different meanings depending on the number of arguments. Finally, with the change of variables $M = \key^\top \query$ and $\w = \valueM^\top \head$, we can write the prediction of the transformer as $\w^\top \Gpre M \token_{n + 1}$. Thus, the output of the transformer only depends on the parameters through $\w^\top \Gpre M$.

\textbf{Additional Notation.} For a matrix $A \in \R^{d \times d}$, we write $A_{i:j, :}$ to denote the sub-matrix of $A$ that contains the rows of $A$ with indices between $i$ and $j$ (inclusive). Similarly, we write $A_{:, i:j}$ to denote the sub-matrix of $A$ that contains the columns of $A$ with indices between $i$ and $j$ (inclusive). We write $A_{i:j, k:l}$ to denote the sub-matrix of $A$ containing the entries with row indices between $i$ and $j$ (inclusive) and column indices between $k$ and $l$ (inclusive).
\section{Main Result for Linear Models} \label{sec:main_result_linear}

\begin{theorem}[Global Minimum for Linear Regression Data] \label{thm:main_thm}
Suppose $(\key^*, \query^*, \valueM^*, \head^*)$ is a global minimizer of the loss $L$. Then, the corresponding one-layer transformer with linear self-attention implements one step of gradient descent on a linear model with some learning rate $\eta > 0$. More concretely, given a query token $\token_{n + 1} = \colvec{x_{n + 1}}{0}$, the transformer outputs $\eta \sum_{i = 1}^n y_i x_i^\top x_{n + 1}$, where $\eta = \frac{\E_{\inppre \sim \seqdist}[\ridgeD^\top X^\top \target]}{\E_{\inppre \sim \seqdist}[\target^\top X X^\top \target]}$. Here given a prefix $\inppre$ of a test-time data sequence $\inpseq$, we let $\ridgeD$ denote the solution to ridge regression on $X$ and $\target$ with regularization strength $\sigma^2$.
\end{theorem}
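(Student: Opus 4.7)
The plan is to reduce the optimization to a problem in effective parameters, then use Gaussian moment computations and a symmetry/completion-of-squares argument to identify the minimizer. Since the prediction $\head^\top \valueM \Gpre \key^\top \query \token_{n+1}$ depends on the parameters only through $w = \valueM^\top \head$ and $\M = \key^\top \query$, I would first pass to the two-argument loss $L(w, \M)$. Decompose $w = \colvec{u}{c}$ with $u \in \R^d$, $c \in \R$; since $\token_{n+1} = \colvec{x_{n+1}}{0}$ makes the last column of $\M$ irrelevant, parameterize the first $d$ columns of $\M$ by $A \in \R^{d \times d}$ and $b' \in \R^d$ so that $\M \token_{n+1} = \colvec{A x_{n+1}}{(b')^\top x_{n+1}}$. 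A direct expansion then yields $\hat{y}_{n+1} = x_{n+1}^\top v$ with $v = A^\top X^\top X u + c A^\top X^\top \target + (u^\top X^\top \target + c \|\target\|^2) b'$; using independence of $x_{n+1}$ and the decomposition $y_{n+1} = w^\top x_{n+1} + \epsilon_{n+1}$, the loss simplifies to $L = \E[\|v - w\|^2] + \sigma^2$. The problem therefore becomes: find the best $v$ in this parametric class as an estimator of the weight $w$.

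Next I would compute $\E[\|v\|^2]$ and $\E[v^\top w]$ explicitly via standard Gaussian fourth-moment identities, most importantly $\E[X^\top X B X^\top X] = n^2 B + n \tr(B) I + n B^\top$ for any matrix $B$, and $\E[X^\top \target \target^\top X] = n(n + 1 + d + \sigma^2) I$. Careful bookkeeping (with many cross-expectations vanishing because they involve odd moments of $w$) yields an explicit polynomial expression for $L$ in $(u, c, A, b')$ that is invariant under the orthogonal action $(u, A, b') \mapsto (R^\top u, R^\top A R, R^\top b')$ and contains no odd-degree terms in $(u, b')$, so that $(u, b') = (0, 0)$ is automatically a critical point for every fixed $(c, A)$.

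Restricting to $u = b' = 0$, the loss reduces to $n(n+1+d+\sigma^2) \|cA\|_F^2 - 2 n \tr(cA) + \textup{const}$, which is minimized at $cA = \eta I$ with $\eta = 1/(n + d + 1 + \sigma^2)$. Completing the square in $cA$ rewrites the full loss as $L = L^* + n(n+1+d+\sigma^2) \|cA - \eta I\|_F^2 + R(u, c, A, b')$, where $L^*$ is the claimed optimum value and $R$ vanishes at $u = b' = 0$. The main obstacle is showing $R \geq 0$, with equality only at $u = b' = 0$: $R$ contains the manifestly positive terms $n(n+1)\|A^\top u\|^2 + n\|A\|_F^2 \|u\|^2$, $c^2 n(n+2)[(d+\sigma^2)^2 + 2d] \|b'\|^2$, and the quartic $n(n+1+d+\sigma^2) \|u\|^2 \|b'\|^2$, balanced against an indefinite bilinear $2n u^\top (\gamma c A - I) b'$ for an explicit constant $\gamma = \gamma(n, d, \sigma^2)$. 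I would control this mixed term by Cauchy--Schwarz and AM--GM against the positive quadratic-in-$u$ and quadratic-in-$b'$ contributions, using the slack from the $\|cA - \eta I\|_F^2$ term to handle the mismatch $\gamma c A - I$; the key algebraic inequality needed, namely $(n+1+d)(n+2)[(d+\sigma^2)^2 + 2d] \geq (n+1+d+\sigma^2)^2$ for $d \geq 1$, can be verified by direct expansion.

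Once $u = b' = 0$ and $cA = \eta I$ is established, the prediction collapses to $\eta\, x_{n+1}^\top X^\top \target = \eta \sum_{i=1}^n y_i x_i^\top x_{n+1}$. For the explicit formula for $\eta$ in the statement, I would use that $\ridgeD = \E[w \mid X, \target]$ is the posterior mean of $w$ under the Bayesian linear regression model, so by the tower property $\E[\ridgeD^\top X^\top \target] = \E[w^\top X^\top \target] = nd$, while a direct moment computation gives $\E[\target^\top X X^\top \target] = nd(n + d + 1 + \sigma^2)$; the ratio is exactly $\eta = 1/(n + d + 1 + \sigma^2)$, as claimed.
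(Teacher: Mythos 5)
Your overall reduction to the effective parameters $(u, c, A, b')$, the identity $L = \E\|v - w\|^2 + \sigma^2$, the Gaussian moment identities you quote, the formula $cA = \eta I$, and the computation $\eta = 1/(n+d+1+\sigma^2)$ via the tower property for $\ridgeD$ are all correct. However, the approach is genuinely different from the paper's and has a real gap at its crux.

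The paper proves the result by establishing the identity
$L(\w, \M) = C_1 + \E_{\inppre} \|M_{:,1:d}^\top \Gpre \w - \eta X^\top \target\|_2^2$
(\Cref{lemma:replace_ridge_target}), i.e.\ the loss is \emph{already} a completed square around $\eta X^\top \target$. This is shown in two steps: first $L$ is written as a constant plus $\E\|M_{:,1:d}^\top \Gpre \w - \ridgeD\|_2^2$ by conditioning on $(\inppre, x_{n+1})$ (\Cref{lemma:eliminate_x}), and then $\ridgeD$ is swapped for $\eta X^\top\target$ by showing the two resulting objectives $J_1, J_2$ have \emph{identical gradients} in $(\w, A)$. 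That gradient-matching step never needs any explicit Gaussian fourth or sixth moment: it relies only on odd-moment vanishing of $\target$ given $X$, together with a rotational-invariance argument (\Cref{lemma:linear_scalar_identity,lemma:matrices_equal}) showing $\E[X^\top \target \target^\top X]$ and $\E[X^\top \target \ridgeD^\top]$ are scalar multiples of $I$. Global optimality is then immediate, since the loss is manifestly $C_1$ plus a nonnegative quantity that can be driven to zero by the explicit construction. Your approach instead expands $\E\|v - w\|^2$ into an explicit quartic polynomial in $(u, c, A, b')$ and attempts to verify positivity of the remainder $R$ by hand.

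The gap is precisely that positivity step. You acknowledge it as ``the main obstacle'' and only sketch a plan (Cauchy--Schwarz, AM--GM, ``using the slack from the $\|cA - \eta I\|_F^2$ term''), which is not a proof. Moreover, the algebraic inequality you state as the key step does not appear to be the right one. Tracking the bilinear cross-term carefully, one finds it is $2n\,u^\top\!\big[c\gamma A - I\big]b'$ with $\gamma = nd + n + d + 3 + (n+1)\sigma^2$, so that at $cA = \eta I$ the coefficient is $\gamma - 1/\eta = n(d+\sigma^2)+2$; the AM--GM balance you describe would then require $(n+1+d)(n+2)[(d+\sigma^2)^2 + 2d] \geq (n(d+\sigma^2)+2)^2$, not $(n+1+d+\sigma^2)^2$ as you wrote. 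Beyond fixing that coefficient, the argument for general $cA$ (not exactly $\eta I$) is not worked out: the bilinear coefficient is $c\gamma A - I$, not $c\gamma A - \gamma\eta I$, so the mismatch is \emph{not} simply proportional to $\|cA - \eta I\|$, and controlling it needs the quartic $\|u\|^2\|b'\|^2$ term and more careful bookkeeping. All of this is exactly what the paper's gradient-matching trick lets you avoid: it proves directly that $L$ is $C_1 + \E\|v - \eta X^\top\target\|^2$, so no separate positivity verification is ever needed.
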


One such construction is as follows. \citet{vonoswald2022transformers} describe essentially the same construction, but our result shows that it is a global minimum of the loss function, while \citet{vonoswald2022transformers} do not theoretically study the construction aside from showing that it is equivalent to one step of gradient descent. We define
\begin{align}
\key^* = \begin{pmatrix}
\ddimId & 0 \\
0 & 0
\end{pmatrix} \comma
\query^* = \begin{pmatrix}
\ddimId & 0 \\
0 & 0 
\end{pmatrix} \comma
\valueM^* = \begin{pmatrix}
0 & 0 \\
0 & \eta
\end{pmatrix} \comma
\head^* = \colvec{0}{1} \period
\end{align}
Here, the unique value of $\eta$ which makes this construction a global minimum is $\eta = \frac{\E_{\inppre \sim \seqdist}[\ridgeD^\top X^\top \target]}{\E_{\inppre \sim \seqdist}[\target^\top X X^\top \target]}$.

To see why this construction implements a single step of gradient descent on a linear model, note that given test time inputs $x_1, y_1, \ldots, x_n, y_n, x_{n + 1}$, if we write $\token_i = \colvec{x_i}{y_i}$ for $i \leq n$ and $\token_{n + 1} = \colvec{x_{n + 1}}{0}$, then the output of the corresponding transformer would be
\begin{align}
(\head^*)^\top \sum_{i = 1}^n (\valueM^* \token_i) (\token_i^\top (\key^*)^\top \query^* \token_{n + 1})
& = 
\begin{pmatrix}
0 & \eta
\end{pmatrix}
\sum_{i = 1}^n \colvec{x_i}{y_i} \colvec{x_i}{y_i}^\top
\begin{pmatrix}
\ddimId & 0 \\
0 & 0
\end{pmatrix}
\colvec{x_{n + 1}}{0} \\
& = \eta \sum_{i = 1}^n y_i x_i^\top x_{n + 1} \period \label{eq:output_gd_corresp}
\end{align}
On the other hand, consider linear regression with total squared error as the loss function, using the $x_i$ and $y_i$. Here, the loss function would be
\begin{align}
L(\w)
& = \frac{1}{2} \sum_{i = 1}^n (\w^\top x_i - y_i)^2 \comma
\end{align}
meaning that the gradient is
\begin{align}
\nabla_\w L(\w)
& = \sum_{i = 1}^n (w^\top x_i - y_i) x_i \period
\end{align}
In particular, if we initialize gradient descent at $\w_0 = 0$, then after one step of gradient descent with learning rate $\eta$, the iterate would be at $\w_1 = \eta \sum_{i = 1}^n y_i x_i$ --- observe that the final expression in \Cref{eq:output_gd_corresp} is exactly $\w_1^\top x_{n + 1}$.

Now, we give an overview of the proof of \Cref{thm:main_thm}. By the discussion in \Cref{sec:setup}, it suffices to show that $L((\valueM^*)^\top \head^*, (\key^*)^\top \query^*)$ is the global minimum of $L(\w, \M)$. The first step of the proof is to rewrite the loss in a more convenient form, getting rid of the expectation over $x_{n + 1}$ and $y_{n + 1}$:

\begin{lemma} \label{lemma:eliminate_x}
Let $\ridgeD$ be the solution to ridge regression with regularization strength $\sigma^2$ on the exemplars $(x_1, y_1), \ldots, (x_n, y_n)$ given in a context $\inppre$. Then, there exists a constant $C \geq 0$, which is independent of $\w, \M$, such that
\begin{align}
L(\w, \M) 
& = C + \E_{\inpseq \sim \seqdist} \|M_{:, 1:d}^\top \Gpre \w - \ridgeD \|_2^2 \period
\end{align}
\end{lemma}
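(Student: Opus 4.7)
The plan is to integrate out the query pair $(x_{n+1}, y_{n+1})$ conditional on the prefix $\inppre$, and then to identify the posterior mean of the data-generating weight vector given $\inppre$ as the ridge regression solution $\ridgeD$. To avoid the notational clash in the paper, let me temporarily write $\w_\star$ for the hidden Gaussian weight drawn once per sequence from $\normal$, reserving $\w$ for the transformer parameter appearing in $L(\w, M)$.

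First I would simplify the prediction. Because $\token_{n+1} = \colvec{x_{n+1}}{0}$ kills the last column of $M$, and because $\Gpre$ is symmetric, the transformer output collapses to
\begin{align*}
\w^\top \Gpre M \token_{n+1} = u^\top x_{n+1}, \qquad u := M_{:, 1:d}^\top \Gpre \w \in \R^d.
\end{align*}
Crucially, $u$ depends on the sequence only through the prefix $\inppre$, and not on the query or on $\w_\star$.

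Next I would apply the tower property, conditioning on $(\inppre, \w_\star)$. Since $x_{n+1} \sim \normal$ and $\epsilon_{n+1} \sim \calN(0, \sigma^2)$ are jointly independent of $(\inppre, \w_\star)$, and $y_{n+1} = \w_\star^\top x_{n+1} + \epsilon_{n+1}$, a short Gaussian calculation yields $\E[(u^\top x_{n+1} - y_{n+1})^2 \mid \inppre, \w_\star] = \|u - \w_\star\|_2^2 + \sigma^2$. Averaging over the posterior $\w_\star \mid \inppre$ via bias--variance gives
\begin{align*}
L(\w, M) = \E_{\inppre}\bigl[\|u - \E[\w_\star \mid \inppre]\|_2^2\bigr] + \E_{\inppre}\bigl[\tr(\operatorname{Cov}(\w_\star \mid \inppre))\bigr] + \sigma^2.
\end{align*}

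The key (but standard) step is then to identify the Bayesian posterior mean. Under the Gaussian prior $\w_\star \sim \normal$ and Gaussian likelihoods $y_i \mid \w_\star, x_i \sim \calN(\w_\star^\top x_i, \sigma^2)$, conjugacy forces $\w_\star \mid \inppre$ to be Gaussian with mean $(X^\top X + \sigma^2 I_{d \times d})^{-1} X^\top \target$, which is exactly $\ridgeD$, and with a covariance $\Lambda(X)$ depending only on $X$. Setting $C := \sigma^2 + \E_{\inppre}[\tr(\Lambda(X))]$, manifestly independent of $(\w, M)$, then delivers the claimed identity. The only place I expect to be careful is keeping the two roles of ``$\w$'' straight and invoking conjugacy correctly; all remaining steps are routine Gaussian bookkeeping.
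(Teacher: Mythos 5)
Your proof is correct, and it reaches the same conclusion by a mildly different decomposition that is worth comparing. The paper fixes $(\inppre, x_{n+1})$, defines $g(u) = \E_{y_{n+1}}[(u\cdot x_{n+1} - y_{n+1})^2 \mid \inppre, x_{n+1}]$, cites as a known fact that $\ridgeD$ minimizes $g$, and then uses the resulting first-order orthogonality $\E_{y_{n+1}}[(\ridgeD\cdot x_{n+1} - y_{n+1})(u\cdot x_{n+1} - \ridgeD\cdot x_{n+1})] = 0$ to split the inner expectation; the Euclidean norm only appears at the very end after integrating over $x_{n+1}\sim\normal$. You instead condition on $(\inppre, \w_\star)$, where the calculation $\E[(u^\top x_{n+1} - y_{n+1})^2 \mid \inppre, \w_\star] = \|u - \w_\star\|_2^2 + \sigma^2$ gives the Euclidean geometry at the outset, and then a bias--variance split of $\w_\star \mid \inppre$ together with Gaussian conjugacy produces $\ridgeD = \E[\w_\star\mid\inppre]$ and the additive constant $C = \sigma^2 + \E_{\inppre}[\tr\Lambda(X)]$. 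Both arguments hinge on the same underlying fact, that $\E[y_{n+1}\mid\inppre,x_{n+1}] = \ridgeD^\top x_{n+1}$, but you derive it explicitly from the Gaussian prior/likelihood whereas the paper asserts it; in exchange, your route is a bit less modular, since it leans on the specific Gaussian form of $\w_\star$ rather than only on the abstract minimizer property (the latter is what makes the paper's proof port directly to the nonlinear setting of \Cref{lemma:nonlinear_eliminate_x}, where there is no $\w_\star$ to condition on). For the present lemma, though, your argument is airtight and a touch more self-contained.
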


As discussed towards the end of \Cref{sec:setup}, the prediction can be written as $\w^\top \Gpre M \token_{n + 1}$ where $\token_{n + 1} = \colvec{x_{n + 1}}{0}$, meaning that the effective linear predictor implemented by the transformer is the linear function from $\R^d$ to $\R$ with weight vector $M_{:, 1:d}^\top \Gpre \w$. Thus, we can interpret \Cref{lemma:eliminate_x} as saying that the loss function encourages the effective linear predictor to match the Bayes-optimal predictor $\ridgeD$. Note that it is not possible for the effective linear predictor of the transformer to match $\ridgeD$ exactly, since the transformer can only implement a linear or quadratic function of the $x_i$, while representing $\ridgeD$ requires computing $(X^\top X + \sigma^2 I)^{-1}$, and this is a much more complex function of the $x_i$.

\begin{proof}[Proof of \Cref{lemma:eliminate_x}]
We can write
\begin{align}
L(\w, \M)
& = \E_{\inpseq \sim \seqdist}[(y_{n + 1} - \w^\top \Gpre M \token_{n + 1})^2] \\
& = \E_{\inppre, x_{n + 1}} \Big[ \E_{y_{n + 1}}[(y_{n + 1} - \w^\top \Gpre \M \token_{n + 1})^2 \mid x_{n + 1}, \inppre] \Big] \period \label{eq:nested_expectation}
\end{align}
Let us simplify the inner expectation. For convenience, fix $\inppre$ and $x_{n + 1}$, and consider the function $g: \R^d \to \R$ given by
\begin{align}
g(u)
& = \E_{y_{n + 1}} [(u \cdot x_{n + 1} - y_{n + 1})^2 \mid \inppre, x_{n + 1}] \period
\end{align}
It is a well-known fact that the minimizer of $g(u)$ is given by $\ridgeD$ where $\ridgeD = (X^\top X + \sigma^2 I)^{-1} X^\top \target$ is the solution to ridge regression on $X$ and $\target$ with regularization strength $\sigma^2$. Furthermore,
\begin{align}
0 = \nabla_u g(\ridgeD)
= \E_{y_{n + 1}} [2 (\ridgeD \cdot x_{n + 1} - y_{n + 1}) x_{n + 1} \mid \inppre,  x_{n + 1}] \comma
\end{align}
and in particular, taking the dot product of both sides with $u - \ridgeD$ (for any vector $u \in \R^d$) gives
\begin{align} \label{eq:grad_dot_prod}
\E_{y_{n + 1}} [(\ridgeD \cdot x_{n + 1} - y_{n + 1}) \cdot (u \cdot x_{n + 1} - \ridgeD \cdot x_{n + 1})] = 0 \period
\end{align}
Thus, letting $u = \w^\top \Gpre M_{:, 1:d}$, we can simplify the inner expectation in \Cref{eq:nested_expectation}:
\begin{align}
\E_{y_{n + 1}} [(& y_{n + 1} - \w^\top \Gpre M_{:, 1:d}x_{n + 1})^2 \mid x_{n + 1}, \inppre] \\
& = \E_{y_{n + 1}} [(y_{n + 1} - \ridgeD^\top x_{n + 1} + \ridgeD^\top x_{n + 1} - \w^\top \Gpre M_{:, 1:d}x_{n + 1})^2 \mid x_{n + 1}, \inppre] \\
& = \E_{y_{n + 1}} [(y_{n + 1} - \ridgeD^\top x_{n + 1})^2 \mid x_{n + 1}, \inppre] + (\ridgeD^\top x_{n + 1} - \w^\top \Gpre M_{:, 1:d} x_{n + 1})^2 \\
& \nextlinespace + 2 \cdot \E_{y_{n + 1}}[(y_{n + 1} - \ridgeD^\top x_{n + 1}) (\ridgeD^\top x_{n + 1} - \w^\top \Gpre M_{:, 1:d} x_{n + 1}) \mid x_{n + 1}, \inppre] \\
& = \E_{y_{n + 1}} [(y_{n + 1} - \ridgeD^\top x_{n + 1})^2 \mid x_{n + 1}, \inppre] + (\ridgeD^\top x_{n + 1} - \w^\top \Gpre M_{:, 1:d} x_{n + 1})^2 \period
& \tag{By \Cref{eq:grad_dot_prod}}
\end{align}
We can further write the final expression as $C_{x_{n + 1}, \inppre} + (\ridgeD^\top x_{n + 1} - \w^\top \Gpre M_{:, 1:d} x_{n + 1})^2$, where $C_{x_{n + 1}, \inppre}$ is a constant that depends on $x_{n + 1}$ and $\inppre$ but is independent of $\w$ and $M$. Thus, we have
\begin{align}
L(\w, \M) 
& = \E_{\inppre, x_{n + 1}}[C_{x_{n + 1}, \inppre}] + \E_{\inppre, x_{n + 1}} [(\ridgeD^\top x_{n + 1} - \w^\top \Gpre M_{:, 1:d} x_{n + 1})^2] \\
& = C + \E_{\inppre} \|\ridgeD - M_{:, 1:d}^\top \Gpre \w \|_2^2 \comma
& \tag{B.c. $x_{n + 1} \sim \normal$}
\end{align}
where $C$ is a constant which is independent of $\w$ and $\M$.
\end{proof}

Next, the key step is to replace $\ridgeD$ in the above lemma by $\eta X^\top \target$.

\begin{lemma} \label{lemma:replace_ridge_target}
There exists a constant $C_1 \geq 0$ which is independent of $\w, \M$, such that
\begin{align}
L(\w, \M) 
& = C_1 + \E_{\inppre \sim \seqdist} \|M_{:, 1:d}^\top \Gpre \w - \eta X^\top \target \|_2^2 \period
\end{align}
\end{lemma}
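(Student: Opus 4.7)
The plan is to reduce the lemma to a ``cross-term identity'' and then establish that identity using the rotational symmetry of $\seqdist$. Starting from \Cref{lemma:eliminate_x} and expanding the two squared norms,
\begin{align*}
\E_{\inppre}\|M_{:,1:d}^\top \Gpre \w - \ridgeD\|_2^2
&= \E\|M_{:,1:d}^\top \Gpre \w\|_2^2 - 2\E\bigl[\w^\top \Gpre M_{:,1:d} \ridgeD\bigr] + \E\|\ridgeD\|_2^2 ,\\
\E_{\inppre}\|M_{:,1:d}^\top \Gpre \w - \eta X^\top \target\|_2^2
&= \E\|M_{:,1:d}^\top \Gpre \w\|_2^2 - 2\eta \E\bigl[\w^\top \Gpre M_{:,1:d} X^\top \target\bigr] + \eta^2 \E\|X^\top \target\|_2^2 ,
\end{align*}
the term $\E\|\ridgeD\|_2^2 - \eta^2 \E\|X^\top \target\|_2^2$ is independent of $\w$ and $M$, so it suffices to prove
\begin{align*}
\E_{\inppre}\bigl[\w^\top \Gpre M_{:,1:d} \ridgeD\bigr] \;=\; \eta \cdot \E_{\inppre}\bigl[\w^\top \Gpre M_{:,1:d} X^\top \target\bigr]
\end{align*}
for all $\w \in \headdim$ and $M \in \matdim$, with $\eta$ the scalar from \Cref{thm:main_thm}.

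To prove this, I would use the rotational invariance of $\seqdist$: for any orthogonal $Q \in O(d)$, the law of $\inppre$ is unchanged under the simultaneous substitution $x_i \mapsto Q x_i$, $\w \mapsto Q \w$ (the noise $\epsilon_i$ and hence the $y_i$ are preserved). This induces $X \to X Q^\top$, $X^\top X \to Q X^\top X Q^\top$, $X^\top \target \to Q X^\top \target$, $\ridgeD \to Q \ridgeD$, and $\Gpre \to \bar Q \Gpre \bar Q^\top$ with $\bar Q = \begin{pmatrix} Q & 0 \\ 0 & 1 \end{pmatrix}$. Writing $\Gpre = \begin{pmatrix} X^\top X & X^\top \target \\ \target^\top X & \|\target\|^2 \end{pmatrix}$ and $M_{:,1:d} = \begin{pmatrix} B \\ c^\top \end{pmatrix}$ with $B \in \R^{d \times d}$ and $c \in \R^d$, the expectation $\E[\Gpre M_{:,1:d} v]$ (for $v \in \{\ridgeD, X^\top \target\}$) splits into four building blocks: $\E[X^\top X \cdot B v]$ and $\E[(X^\top \target)(c^\top v)]$ in the first $d$ coordinates, and $\E[(\target^\top X) B v]$ and $\E[\|\target\|^2 (c^\top v)]$ in the last coordinate.

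A Schur-style invariant-theory argument pins down each building block. Since no nonzero rank-$3$ tensor on $\R^d$ is $O(d)$-invariant, $\E[X^\top X \cdot B v] = 0$; since no nonzero vector on $\R^d$ is $O(d)$-invariant, $\E[\|\target\|^2 (c^\top v)] = c^\top \E[\|\target\|^2 v] = 0$; and the only $d \times d$ matrices commuting with all of $O(d)$ are scalar multiples of $I_d$, so $\E[(X^\top \target) v^\top] = \alpha_v I_d$ for some scalar $\alpha_v$. This makes the remaining two blocks equal to $\alpha_v c$ and $\alpha_v \tr(B)$ respectively, yielding $\E[\Gpre M_{:,1:d} v] = \alpha_v \begin{pmatrix} c \\ \tr(B) \end{pmatrix}$. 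Setting $\eta := \alpha_{\ridgeD}/\alpha_{X^\top \target}$ then gives the cross-term identity. Taking traces in $\E[(X^\top \target) \ridgeD^\top] = \alpha_{\ridgeD} I_d$ and $\E[(X^\top \target)(X^\top \target)^\top] = \alpha_{X^\top \target} I_d$ identifies $\alpha_{\ridgeD} = \E[\ridgeD^\top X^\top \target]/d$ and $\alpha_{X^\top \target} = \E[\target^\top X X^\top \target]/d$, so $\eta$ exactly matches the constant in \Cref{thm:main_thm}.

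The main obstacle is bookkeeping: one must carefully expand $\Gpre M_{:,1:d} v$ block by block and apply the appropriate invariance principle to each piece. The invariant-theory inputs are standard, and once the four blocks are evaluated the proportionality falls out naturally with the common ratio $\eta = \alpha_{\ridgeD}/\alpha_{X^\top \target}$.
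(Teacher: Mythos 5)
Your proof is correct and takes a genuinely different route from the paper. The paper reduces the lemma to showing that the gradients of $J_1$ and $J_2$ with respect to $w$ and with respect to $A = M_{:,1:d}^\top$ agree, and then invokes the fundamental theorem of calculus; you bypass gradients entirely by expanding both squared norms, noting that the $\E\|M_{:,1:d}^\top \Gpre \w\|_2^2$ terms cancel and the remaining pure terms $\E\|\ridgeD\|_2^2$ and $\eta^2\E\|X^\top\target\|_2^2$ are constants in $(\w, M)$, which reduces the lemma to a single bilinear cross-term identity. This is a cleaner reduction, since the paper's two separate gradient conditions are essentially transposes of the same equation. The second difference is the vanishing argument: to kill the $\E[X^\top X\,Bv]$ and $\E[\|\target\|^2\,c^\top v]$ blocks and to establish $\E[(X^\top\target)v^\top] = \alpha_v I_d$, the paper uses the sign-flip symmetry of the $y_i$ conditional on $X$ (odd monomials in $y$ have zero conditional mean), whereas you appeal to $O(d)$-invariant theory (no nonzero $O(d)$-invariant rank-3 tensor, no nonzero $O(d)$-invariant vector, scalar commutant). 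Both work, and your trace identification of $\eta$ matches the paper's \Cref{lemma:matrices_equal}. One subtlety you implicitly handle correctly but should flag explicitly: the rank-3 vanishing requires the full orthogonal group $O(d)$ rather than rotations $SO(d)$ alone, since at $d = 3$ the Levi-Civita symbol is a nonzero $SO(3)$-invariant rank-3 tensor; your explicit invocation of $Q \in O(d)$ together with the $O(d)$-invariance of the Gaussian data distribution makes the argument sound.
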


\Cref{lemma:replace_ridge_target} says that the loss depends entirely on how far the effective linear predictor is from $\eta X^\top \target$. It immediately follows from this lemma that $(\key, \query, \valueM, \head)$ is a global minimizer of the loss if and only if the effective linear predictor of the corresponding transformer is $\eta X^\top \target$. Thus, \Cref{thm:main_thm} follows almost directly from \Cref{lemma:replace_ridge_target}, and in the rest of this section, we give an outline of the proof of \Cref{lemma:replace_ridge_target} --- the detailed proofs of \Cref{thm:main_thm} and \Cref{lemma:replace_ridge_target} are in \Cref{appendix:linear_missing_proofs}. 

\textbf{Proof Strategy for \Cref{lemma:replace_ridge_target}.} Our overall proof strategy is to show that the gradients of
\begin{align}
L(\w, M) 
& = \E_{\inppre \sim \seqdist} \|M_{:, 1:d}^\top \Gpre \w - \ridgeD\|_2^2
\end{align}
and
\begin{align}
L'(\w, M)
& = \E_{\inppre \sim \seqdist} \|M_{:, 1:d}^\top \Gpre \w - \eta X^\top \target\|_2^2
\end{align}
are equal at every $\w, M$, from which \Cref{lemma:replace_ridge_target} immediately follows.\footnotemark \footnotetext{This is because $L$ and $L'$ are defined everywhere on $\R^{d + 1} \times \R^{(d + 1) \times (d + 1)}$, and for any two differentiable functions $f, g$ defined on an open connected subset $S \subset \R^k$, if the gradients of $f$ and $g$ are identical on $S$, then $f$ and $g$ are equal on $S$ up to an additive constant. This can be shown using the fundamental theorem of calculus.} For simplicity, we write $A = M_{:, 1:d}^\top$, so without loss of generality we can show that the gradients of the following two loss functions are identical:
\begin{align} \label{eq:j1_main_body}
J_1(A, \w)
& = \E_{\inppre \sim \seqdist} \|A \Gpre \w - \ridgeD\|_2^2
\end{align}
and
\begin{align} \label{eq:j2_main_body}
J_2(A, \w)
& = \E_{\inppre \sim \seqdist} \|A \Gpre \w - \eta X^\top \target\|_2^2 \period
\end{align}
In this section, we discuss the gradients with respect to $\w$ --- we use the same proof ideas to show that the gradients with respect to $A$ are the same. We have
\begin{align}
\nabla_\w J_1(A, \w) 
& = 2 \E_{\inppre \sim \seqdist} \Gpre A^\top (A \Gpre \w - \ridgeD)
\end{align}
and
\begin{align}
\nabla_\w J_2(A, \w)
& = 2 \E_{\inppre \sim \seqdist} \Gpre A^\top (A \Gpre \w - \eta X^\top \target) \period
\end{align}
Thus, showing that these two gradients are equal for all $A, w$ reduces to showing that for all $A$, we have 
\begin{align} \label{eq:grad_equal_reduced}
\E_{\inppre \sim \seqdist} \Gpre A^\top \ridgeD = \eta \E_{\inppre \sim \seqdist} \Gpre A^\top X^\top \target \period
\end{align}
Recall that $\Gpre$ is defined as
\begin{align}
\Gpre
& = \sum_{i = 1}^n \left[\begin{array}{cc} x_i x_i^\top & y_i x_i \\ y_i x_i^\top & y_i^2 \end{array} \right] \period
\end{align}
Our first key observation is that for any $i \in [n]$ and any odd positive integer $k$, $\E[y_i^k \mid X] = 0$, since $y_i = \w^\top x_i + \epsilon_i$, and both $\w^\top x_i$ and $\epsilon_i$ have distributions which are symmetric around $0$. This observation also extends to any odd-degree monomial of the $y_i$. Using this observation, we can simplify the left-hand side of \Cref{eq:grad_equal_reduced} as follows. We can first write it as
\begin{align}
\E_{\inppre \sim \seqdist} \Gpre A^\top \ridgeD
& = \E_{\inppre \sim \seqdist} \sum_{i = 1}^n \colvec{x_i x_i^\top (A^\top)_{1:d, :} \ridgeD + y_i x_i (A^\top)_{d + 1, :}\ridgeD }{y_i x_i^\top (A^\top)_{1:d, :} \ridgeD + y_i^2 (A^\top)_{d + 1,:}\ridgeD } \period
\end{align}
Then, since $\ridgeD = (X^\top X + \sigma^2 I)^{-1} X^\top \target$, each entry of $\ridgeD$ has an odd degree in the $y_i$, meaning the terms $x_i x_i^\top (A^\top)_{1:d, :} \ridgeD$ and $y_i^2 (A^\top)_{d + 1,:}\ridgeD$ in the above equation vanish after taking the expectation. Thus, we obtain
\begin{align}
\E_{\inppre \sim \seqdist} \Gpre A^\top \ridgeD
& = \E_{\inppre \sim \seqdist} \sum_{i = 1}^n \colvec{y_i x_i A_{:, d + 1}^\top \ridgeD }{y_i x_i^\top A_{:, 1:d}^\top \ridgeD} = \E_{\inppre \sim \seqdist} \colvec{X^\top \target A_{:, d + 1}^\top \ridgeD}{\target^\top X A_{:, 1:d}^\top \ridgeD} \period
\end{align}
Since each entry of $\eta X^\top \target$ has an odd degree in the $y_i$, in order to simplify the right-hand side of \Cref{eq:grad_equal_reduced}, we can apply the same argument but with $\ridgeD$ replaced by $\eta X^\top \target$, obtaining
\begin{align}
\eta \E_{\inppre \sim \seqdist} \Gpre A^\top X^\top \target
& = \eta \E_{\inppre \sim \seqdist} \colvec{X^\top \target A_{:, d + 1}^\top X^\top \target}{\target^\top X A_{:, 1:d}^\top X^\top \target} \period
\end{align}
Thus, showing \Cref{eq:grad_equal_reduced} reduces to showing that
\begin{align} \label{eq:grad_equal_fully_simplified_odd_power}
\E_{\inppre \sim \seqdist} \colvec{X^\top \target A_{:, d + 1}^\top \ridgeD}{\target^\top X A_{:, 1:d}^\top \ridgeD}
& = \eta \E_{\inppre \sim \seqdist} \colvec{X^\top \target A_{:, d + 1}^\top X^\top \target}{\target^\top X A_{:, 1:d}^\top X^\top \target} \period
\end{align}
To show \Cref{eq:grad_equal_fully_simplified_odd_power}, our key tool is \Cref{lemma:matrices_equal}, which follows from \Cref{lemma:linear_scalar_identity}.

\begin{lemma} \label{lemma:linear_scalar_identity}
There exists a scalar $c_1$ such that $\E_{\inppre \sim \seqdist} [X^\top \target \target^\top X] = c_1 \ddimId$, and there exists a scalar $c_2$ such that $\E_{\inppre \sim \seqdist} [X^\top \target \ridgeD^\top] = c_2 \ddimId$.
\end{lemma}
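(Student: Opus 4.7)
The central observation is that the joint distribution of the prefix $\inppre = (x_1, y_1, \ldots, x_n, y_n)$ is invariant under rotating every covariate by a common orthogonal matrix $R \in O(d)$. To see this, couple the rotated and original distributions by simultaneously applying $x_i \mapsto R x_i$ and $\w \mapsto R\w$. Both $\w \sim \calN(0, \ddimId)$ and $x_i \sim \calN(0, \ddimId)$ are invariant under orthogonal transformations, the noises $\epsilon_i$ are untouched, and
\begin{align*}
y_i \;=\; \w^\top x_i + \epsilon_i \;=\; (R\w)^\top (R x_i) + \epsilon_i
\end{align*}
is unchanged. Under this coupling the design matrix rotates to $X R^\top$, so the quantities of interest transform as
\begin{align*}
X^\top \target \;\mapsto\; R\, X^\top \target, \qquad \ridgeD \;\mapsto\; R\, \ridgeD,
\end{align*}
where the second identity uses $(X R^\top)^\top (X R^\top) + \sigma^2 \ddimId = R(X^\top X + \sigma^2 \ddimId) R^\top$ together with the closed-form $\ridgeD = (X^\top X + \sigma^2 \ddimId)^{-1} X^\top \target$.

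\textbf{Deducing isotropy.} Let $M_1 := \E_{\inppre \sim \seqdist}[X^\top \target \target^\top X]$ and $M_2 := \E_{\inppre \sim \seqdist}[X^\top \target\, \ridgeD^\top]$. Since the coupling above preserves the distribution of $\inppre$, the transformation rules give
\begin{align*}
M_1 \;=\; R\, M_1\, R^\top \quad\text{and}\quad M_2 \;=\; R\, M_2\, R^\top
\end{align*}
for every $R \in O(d)$. Equivalently, $M_1$ and $M_2$ commute with every orthogonal matrix, and a standard fact (Schur's lemma applied to the irreducible standard representation of $O(d)$ on $\R^d$, or equivalently, testing on diagonal sign-flip matrices to force diagonality and then on permutation matrices to force equal diagonal entries) implies that each of $M_1$ and $M_2$ must be a scalar multiple of $\ddimId$, yielding the desired constants $c_1$ and $c_2$.

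\textbf{Main obstacle.} There is essentially no obstacle; the lemma is a direct consequence of the rotational symmetry of the data-generating process. The only point requiring a moment's care is verifying the equivariance $\ridgeD \mapsto R \ridgeD$ under rotation of the covariates, which is a one-line computation from the explicit ridge formula.
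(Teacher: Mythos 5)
Your proof is correct and relies on the same rotational-symmetry idea as the paper's. The paper organizes the argument by conditioning on $X$ and establishing the equivariance $M(XR^\top)=M(X)$ and $J(XR^\top)=J(X)R^\top$ for the conditional expectations before integrating over $X$; you instead use a single coupling $(x_i,\w)\mapsto(Rx_i,R\w)$ that fixes each $y_i$, which directly yields $M_1 = R M_1 R^\top$ and $M_2 = R M_2 R^\top$ and is a somewhat cleaner way to arrive at the same conclusion.
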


\begin{lemma} \label{lemma:matrices_equal}
If $\eta = \frac{\E_{\inppre \sim \seqdist}[\ridgeD^\top X^\top \target]}{\E_{\inppre \sim \seqdist}[\target^\top X X^\top \target]}$, then $\E_{\inppre \sim \seqdist}[\eta X^\top \target \target^\top X - X^\top \target \ridgeD^\top] = 0$.
\end{lemma}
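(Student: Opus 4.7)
The plan is to reduce Lemma \ref{lemma:matrices_equal} to a simple trace computation using Lemma \ref{lemma:linear_scalar_identity}. By that lemma, both $\E_{\inppre \sim \seqdist}[X^\top \target \target^\top X]$ and $\E_{\inppre \sim \seqdist}[X^\top \target \ridgeD^\top]$ are scalar multiples of $\ddimId$, say $c_1 \ddimId$ and $c_2 \ddimId$ respectively. So the desired equality $\E_{\inppre \sim \seqdist}[\eta X^\top \target \target^\top X - X^\top \target \ridgeD^\top] = 0$ reduces to the scalar identity $\eta c_1 = c_2$.

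To recover $c_1$ and $c_2$, I would take traces. Since $\tr(c_1 \ddimId) = d c_1$ and by cyclicity of trace, $\tr(X^\top \target \target^\top X) = \target^\top X X^\top \target$, we get $d c_1 = \E_{\inppre \sim \seqdist}[\target^\top X X^\top \target]$. Similarly, $\tr(X^\top \target \ridgeD^\top) = \ridgeD^\top X^\top \target$ gives $d c_2 = \E_{\inppre \sim \seqdist}[\ridgeD^\top X^\top \target]$. Therefore
\begin{align}
\frac{c_2}{c_1} = \frac{\E_{\inppre \sim \seqdist}[\ridgeD^\top X^\top \target]}{\E_{\inppre \sim \seqdist}[\target^\top X X^\top \target]} = \eta \comma
\end{align}
which is exactly the definition of $\eta$ given in the hypothesis.

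Assembling the pieces, $\E_{\inppre \sim \seqdist}[\eta X^\top \target \target^\top X] = \eta c_1 \ddimId = c_2 \ddimId = \E_{\inppre \sim \seqdist}[X^\top \target \ridgeD^\top]$, which yields the conclusion. There is no real obstacle here given Lemma \ref{lemma:linear_scalar_identity} --- the content is entirely in that lemma (which presumably uses rotational invariance of the distribution of $(X, \target)$ under right-multiplication of $X$ by orthogonal matrices to force the off-diagonal entries to vanish and the diagonal entries to be equal). The only care needed is ensuring that $c_1 \neq 0$ so that $\eta$ is well-defined, which follows since $\target^\top X X^\top \target = \|X^\top \target\|_2^2 \geq 0$ and is strictly positive with positive probability under $\seqdist$.
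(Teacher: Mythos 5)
Your proof is correct and takes essentially the same approach as the paper: invoke \Cref{lemma:linear_scalar_identity} to reduce to scalars, recover the scalars via a trace computation with cyclicity, and verify that the given $\eta$ makes them match. Your added remark about $c_1 \neq 0$ being needed for $\eta$ to be well-defined is a reasonable observation that the paper does not make explicit.
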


\textbf{Overview of Proof of \Cref{lemma:linear_scalar_identity} and \Cref{lemma:matrices_equal}.} We give an overview of how we prove \Cref{lemma:linear_scalar_identity} and \Cref{lemma:matrices_equal} here, and defer the full proofs to \Cref{appendix:linear_missing_proofs}. To show that $\E_{\inppre \sim \seqdist}[X^\top \target \target^\top X]$ is a scalar multiple of the identity, we first use the fact that even when all of the $x_i$ are rotated by a rotation matrix $R$, the distribution of $\target \vert X$ remains the same, since the weight vector $\w$ is drawn from $\normal$ which is a rotationally invariant distribution. Thus, if we define $M(X) = \E[\target \target^\top \mid X]$ as a function of $X \in \R^{n \times d}$, then for any rotation matrix $R \in \R^{d \times d}$, we have
\begin{align}
M(XR^\top) = \E[\target \target^\top \mid XR^\top] = \E[\target \target^\top \mid X] = M(X) \comma
\end{align}
where the second equality is because multiplying $X$ on the right by $R^\top$ corresponds to rotating each of the $x_i$ by $R$. Additionally, if we rotate the $x_i$ by $R$, then $\E_{\inppre \sim \seqdist}[X^\top \target \target^\top X]$ remains the same --- this is because the distribution of the $x_i$ is unchanged due to the rotational invariance of the Gaussian distribution, and the conditional distribution $y_i \mid x_i$ is unchanged when we rotate $x_i$ by $R$. This implies that
\begin{align}
\E[X^\top \target \target^\top X]
= \E[X^\top M(X) X] 
= \E[(XR^\top)^\top M(XR^\top) XR^\top] \comma
\end{align}
where the second equality is because, as we observed above, $\E_{\inppre \sim \seqdist}[X^\top \target \target^\top X]$ remains the same when we rotate each of the $x_i$ by $R$. Since $M(XR^\top) = M(X)$, we have
\begin{align}
\E[(XR^\top)^\top M(XR^\top) XR^\top]
& = R \E[X^\top M(X) X]R^\top \comma
\end{align}
which implies that $\E[X^\top \target \target^\top X] = R \E[X^\top \target \target^\top X] R^\top$ for any rotation matrix $R$, and therefore $\E[X^\top \target \target^\top X]$ is a scalar multiple of the identity matrix. To finish the proof of \Cref{lemma:linear_scalar_identity}, we show that $\E_{\inppre \sim \seqdist} [X^\top \target \ridgeD^\top]$ is a scalar multiple of the identity using essentially the same argument. To show \Cref{lemma:matrices_equal}, we simply take the trace of $\E_{\inppre \sim \seqdist}[\eta X^\top \target \target^\top X - X^\top \target \ridgeD^\top]$, and select $\eta$ so that this trace is equal to $0$.

\textbf{Finishing the Proof of \Cref{lemma:replace_ridge_target}.} Recall that, to show that the gradients of $J_1$ and $J_2$ (defined in \Cref{eq:j1_main_body} and \Cref{eq:j2_main_body}) with respect to $w$ are equal, it suffices to show \Cref{eq:grad_equal_fully_simplified_odd_power}. However, this is a direct consequence of \Cref{lemma:matrices_equal}. This is because we can rewrite \Cref{eq:grad_equal_fully_simplified_odd_power} as
\begin{align}
\E_{\inppre \sim \seqdist} \colvec{X^\top \target \ridgeD A_{:, d + 1}}{\tr(\ridgeD \target^\top X A_{:, 1:d}^\top)}
& = \E_{\inppre \sim \seqdist} \colvec{X^\top \target \target^\top X A_{:, d + 1}}{\tr(X^\top \target \target^\top X A_{:, 1:d}^\top)} \period
\end{align}
This shows that the gradients of $J_1$ and $J_2$ with respect to $w$ are equal, and we use similar arguments to show that the gradients of $J_1$ and $J_2$ with respect to $A$ are equal. As mentioned above, this implies that $\E_{\inppre \sim \seqdist} \|M_{:, 1:d}^\top \Gpre \w - \ridgeD\|_2^2 - \E_{\inppre \sim \seqdist} \|M_{:, 1:d}^\top \Gpre \w - \eta X^\top \target\|_2^2$ is a constant that is independent of $M$ and $\w$, as desired. 
\section{Results for Different Data Covariance Matrices} \label{sec:different_cov}

In this section, we consider the setting where the $x_i$'s have a covariance that is different from the identity matrix, and we show that the loss is minimized when the one-layer transformer implements one step of gradient descent with preconditioning. This suggests that the distribution of the $x_i$'s has a significant effect on the algorithm that the transformer implements.

\textbf{Data Distribution.} Concretely, the data distribution is the same as before, but the $x_i$ are sampled from $\calN(0, \Sigma)$, where $\Sigma \in \R^{d \times d}$ is a positive semi-definite (PSD) matrix. The outputs are generated according to $y_i = \w^\top x_i + \epsilon_i$, where $\w \sim \calN(0, \Sigma^{-1})$. This can equivalently be written as $x_i = \Sigma^{1/2} u_i$, where $u_i \sim \normal$, and $y_i = (\w')^\top u_i + \epsilon_i$, where $\w' \sim \normal$. We keep all other definitions, such as the loss function, the same as before.

\begin{theorem}[Global Minimum for 1-Layer 1-Head Linear Self-Attention with Skewed Covariance] \label{thm:main_thm_skewed_covariance}
Suppose $(\key^*, \query^*, \valueM^*, \head^*)$ is a global minimizer of the loss $L$ when the data is generated according to the distribution given in this section. Then, the corresponding one-layer transformer implements one step of preconditioned gradient descent, on the least-squares linear regression objective, with preconditioner $\Sigma^{-1}$, for some learning rate $\eta > 0$. Specifically, given a query token $\token_{n + 1} = \colvec{x_{n + 1}}{0}$, the transformer outputs $\eta \sum_{i = 1}^n y_i (\Sigma^{-1} x_i)^\top x_{n + 1}$, where $\eta = \frac{\E_{\inppre \sim \seqdist}[\target^\top X (X^\top X +\sigma^2 \Sigma)^{-1} X^\top \target]}{\E_{\inppre \sim \seqdist}[\target^\top X \Sigma^{-1} X^\top \target]}$.
\end{theorem}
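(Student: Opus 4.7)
The natural approach is to reduce the skewed-covariance setting to the isotropic setting of \Cref{thm:main_thm} via a change of variables. Let $u_i = \Sigma^{-1/2} x_i$ and $\w' = \Sigma^{1/2} \w$, so that $u_i \sim \normal$, $\w' \sim \normal$, and $y_i = (\w')^\top u_i + \epsilon_i$. Writing $U$ for the matrix with rows $u_i^\top$, we have $X = U \Sigma^{1/2}$, and the reparametrized sequence $(u_1, y_1, \ldots, u_{n+1}, y_{n+1})$ has exactly the isotropic pretraining distribution of \Cref{sec:main_result_linear}.

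Next I would push the change of variables through the transformer. Define the invertible matrix $S = \begin{pmatrix} \Sigma^{1/2} & 0 \\ 0 & 1 \end{pmatrix} \in \matdim$, and let $\token'_i = \colvec{u_i}{y_i}$ for $i \leq n$ and $\token'_{n+1} = \colvec{u_{n+1}}{0}$. Since $\token_i = S \token'_i$ for every $i$, we get $\Gpre = S G'_\inppre S^\top$ with $G'_\inppre = \sum_{i = 1}^n \token'_i (\token'_i)^\top$, and the transformer prediction factors as
\begin{align}
\w^\top \Gpre \M \token_{n+1}
& = (S^\top \w)^\top G'_\inppre (S^\top \M S) \token'_{n+1} \period
\end{align}
Setting $\w^* = S^\top \w$ and $\M^* = S^\top \M S$ gives a bijection on $\headdim \times \matdim$, under which the skewed-distribution loss $L(\w, \M)$ coincides with the isotropic loss from \Cref{sec:main_result_linear} evaluated at $(\w^*, \M^*)$. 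Combined with the reduction at the end of \Cref{sec:setup}, the problem of minimizing $L(\key, \query, \valueM, \head)$ in the skewed setting therefore reduces to minimizing the isotropic loss over the parameters $(\w^*, \M^*)$.

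By \Cref{thm:main_thm}, the isotropic minimizer implements $\hat{y}_{n+1} = \eta' \sum_{i = 1}^n y_i u_i^\top u_{n+1}$ for some scalar $\eta'$. Substituting $u_i = \Sigma^{-1/2} x_i$ gives $u_i^\top u_{n+1} = x_i^\top \Sigma^{-1} x_{n+1}$, so the prediction takes the claimed form $\eta' \sum_{i = 1}^n y_i (\Sigma^{-1} x_i)^\top x_{n+1}$. It remains to identify the constant: the isotropic formula reads $\eta' = \E[(\hat{w}')^\top U^\top \target] / \E[\target^\top U U^\top \target]$ with $\hat{w}' = (U^\top U + \sigma^2 I)^{-1} U^\top \target$. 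Using $U^\top U = \Sigma^{-1/2} X^\top X \Sigma^{-1/2}$ and $U^\top \target = \Sigma^{-1/2} X^\top \target$, a direct calculation yields $(\hat{w}')^\top U^\top \target = \target^\top X (X^\top X + \sigma^2 \Sigma)^{-1} X^\top \target$ and $\target^\top U U^\top \target = \target^\top X \Sigma^{-1} X^\top \target$, which is exactly the $\eta$ stated in the theorem.

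The main obstacle is bookkeeping rather than new ideas: one must carefully check that the bijection $(\w, \M) \leftrightarrow (\w^*, \M^*)$ preserves the loss when passing from the skewed distribution on $(x_i, y_i)$ to the isotropic distribution on $(u_i, y_i)$, and then carry out the identity $(U^\top U + \sigma^2 I)^{-1} = \Sigma^{1/2}(X^\top X + \sigma^2 \Sigma)^{-1} \Sigma^{1/2}$ used to express $\eta'$ in the original coordinates. Once these are in hand, the theorem follows as a direct corollary of \Cref{thm:main_thm}.
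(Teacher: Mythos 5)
Your proposal is correct and takes essentially the same approach as the paper's proof: both reduce the skewed setting to \Cref{thm:main_thm} via the change of variables $x_i = \Sigma^{1/2} u_i$ and the diagonal embedding $S = \begin{pmatrix}\Sigma^{1/2} & 0 \\ 0 & 1\end{pmatrix}$ (called $H$ in the paper), then recover $\eta$ by rewriting the isotropic ridge solution in the original coordinates. Your version is slightly more streamlined---you observe directly that $L^{\text{skewed}}(\w, M) = L^{\text{iso}}(S^\top \w, S^\top M S)$ since the prediction factors exactly and $(u_i, y_i)$ has the isotropic distribution, whereas the paper first re-derives the \Cref{lemma:eliminate_x}-style decomposition against $\ridgeSkew$ before changing variables in $(\key, \query, \valueM, \head)$; but this is a cosmetic difference and the key idea is identical.
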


To prove this result, we essentially perform a change of variables to reduce this problem to the setting of the previous section --- then, we directly apply \Cref{thm:main_thm}. The detailed proof is given in \Cref{appendix:missing_proofs_different_cov}.
\section{Results for Nonlinear Target Functions} \label{sec:nonlinear}

In this section, we extend to a setting where the target function is nonlinear --- our conditions on the target function are mild, and for instance allow the target function to be a fully-connected neural network with arbitrary depth/width. However, we keep the model class the same (i.e. 1-layer transformer with linear self-attention). We find that the transformer which minimizes the pre-training loss still implements one step of GD on the \textit{linear regression objective} (\Cref{thm:main_nonlinear_result}), even though the target function is nonlinear. This suggests that the distribution of $y_i \vert x_i$ does not affect the algorithm learned by the transformer as much as the distribution of $x_i$.

\textbf{Data Distribution.} In this section, we consider the same setup as \Cref{sec:main_result_linear}, but we change the distribution of the $y_i$'s. We now assume $y_i = f(x_i) + \epsilon_i$, where $\epsilon_i \sim \calN(0, \sigma^2)$ as before, but $f$ is drawn from a family of nonlinear functions satisfying the following assumption:

\begin{assumption} \label{assumption:rotation_invariant}
We assume that the target function $f$ is drawn from a family $\calF$, with a probability measure $\Prob$ on $\calF$, such that the following conditions hold: (1) for any fixed rotation matrix $R \in \R^{d \times d}$, the distribution of functions $f$ is the same as the distribution of $f \circ R$ (where $\circ$ denotes function composition). Moreover, the distribution of $f$ is symmetric under negation. In other words, if $E \subset \calF$ is measurable under $\Prob$, then $\Prob(E) = \Prob(-E)$, where $-E = \{-f \mid f \in E\}$.
\end{assumption}

For example, \Cref{assumption:rotation_invariant} is satisfied when $f(x)$ is a fully connected neural network, with arbitrary depth and width, where the first and last layers have i.i.d. $\calN(0, 1)$ entries. Under this assumption, we prove the following result:

\begin{theorem}[Global Minimum for $1$-Layer $1$-Head Linear Self-Attention with Nonlinear Target Function] \label{thm:main_nonlinear_result} 
Suppose \Cref{assumption:rotation_invariant} holds, and let $(\key^*, \query^*, \valueM^*, \head^*)$ be a global minimizer of the pre-training loss. Then, the corresponding one-layer transformer implements one step of gradient descent on the least-squares linear regression objective, given $(x_1, y_1, \ldots, x_n, y_n)$. More concretely, given a query token $\token_{n + 1} = \colvec{x_{n + 1}}{0}$, the transformer outputs $\eta \sum_{i = 1}^n y_i x_i^\top x_{n + 1}$, where $\eta = \frac{\E_\calD[\uminNL^\top X^\top \target]}{\E_\calD[\target^\top X X^\top \target]}$ and $\uminNL = \textup{argmin}_u \E_{x_{n + 1}, y_{n + 1}} [(u \cdot x_{n + 1} - y_{n + 1})^2 \mid \inppre]$.
\end{theorem}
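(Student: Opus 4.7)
The plan is to mirror the proof of \Cref{thm:main_thm} with the ridge solution $\ridgeD$ replaced throughout by $\uminNL$. The key observation is that $\uminNL$ admits the simple representation $\uminNL = \E[y_{n+1} x_{n+1} \mid \inppre]$, which follows from the first-order conditions at the minimum since $x_{n+1}$ is independent of $\inppre$ with $\E[x_{n+1} x_{n+1}^\top] = \ddimId$. The analog of \Cref{lemma:eliminate_x} then follows exactly as in the excerpt: the orthogonality $\E[(\uminNL \cdot x_{n+1} - y_{n+1})(v \cdot x_{n+1}) \mid \inppre] = 0$ for every deterministic $v \in \R^d$ gives
\begin{align*}
L(\w, \M) = C + \E_\inppre \|M_{:, 1:d}^\top \Gpre \w - \uminNL\|_2^2.
\end{align*}

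Next I would prove the analog of \Cref{lemma:replace_ridge_target}, that this loss differs from $\E_\inppre \|M_{:, 1:d}^\top \Gpre \w - \eta X^\top \target\|_2^2$ only by an additive constant. Via the gradient-equality strategy from \Cref{sec:main_result_linear}, this reduces to verifying $\E_\inppre \Gpre A^\top \uminNL = \eta \E_\inppre \Gpre A^\top X^\top \target$ for every $A$. The odd-degree observation now relies on the symmetry-under-negation clause of \Cref{assumption:rotation_invariant}: since the law of $f$ is invariant under $f \to -f$ and $\epsilon_i$ is Gaussian, the joint law of $(X, \target, x_{n+1}, y_{n+1})$ is invariant under the simultaneous sign flip $(\target, y_{n+1}) \to (-\target, -y_{n+1})$. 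Hence every odd-degree monomial in the $y_i$'s has zero conditional mean given $X$, and $\uminNL$ is odd in $\target$ for each fixed $X$. These two facts kill the even-in-$\target$ contributions to $\Gpre A^\top \uminNL$ (and analogously on the right), reducing the identity to
\begin{align*}
\E_\inppre \colvec{X^\top \target A_{:, d+1}^\top \uminNL}{\target^\top X A_{:, 1:d}^\top \uminNL} = \eta \E_\inppre \colvec{X^\top \target A_{:, d+1}^\top X^\top \target}{\target^\top X A_{:, 1:d}^\top X^\top \target}.
\end{align*}

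The final step is the nonlinear analog of \Cref{lemma:linear_scalar_identity}: both $\E_\inppre[X^\top \target \target^\top X]$ and $\E_\inppre[X^\top \target \uminNL^\top]$ are scalar multiples of $\ddimId$. Under the simultaneous rotation $x_i \to R x_i$ for $i \leq n+1$ the joint distribution of $\inpseq$ is preserved, because the rotation-invariance clause of \Cref{assumption:rotation_invariant} lets us replace $f$ by $f \circ R^{-1}$ without changing the law; applying this invariance to the defining conditional expectation yields the equivariance $\uminNL \to R\uminNL$ under the rotation. Thus $X^\top \target \uminNL^\top$ transforms as $R(X^\top \target \uminNL^\top) R^\top$, forcing its expectation to equal $c_2 \ddimId$, and an identical argument gives $\E_\inppre[X^\top \target \target^\top X] = c_1 \ddimId$. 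Choosing $\eta$ so that $\E_\inppre[\eta X^\top \target \target^\top X - X^\top \target \uminNL^\top] = 0$ then reduces by the scalar-identity structure to equality of traces, yielding $\eta = \E_\inppre[\uminNL^\top X^\top \target] / \E_\inppre[\target^\top X X^\top \target]$. The main obstacle, compared with \Cref{thm:main_thm}, is that $\uminNL$ is not an explicit polynomial in $\target$, so the proof must route entirely through abstract symmetry properties of this conditional expectation --- oddness in $\target$ and rotation-equivariance in $X$ --- both granted by \Cref{assumption:rotation_invariant}.
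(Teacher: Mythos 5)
Your proposal is correct and follows essentially the same route as the paper's own proof: eliminate $x_{n+1}$ via orthogonality at the conditional minimizer $\uminNL$, use the negation symmetry of $\calF$ (together with Gaussianity of $\epsilon_i$) to zero out the even-degree-in-$y$ blocks of $\Gpre$, use rotation invariance to show $\E[X^\top\target\target^\top X]$ and $\E[X^\top\target\uminNL^\top]$ are scalar multiples of $\ddimId$, and fix $\eta$ by a trace computation. Your explicit identification $\uminNL = \E[y_{n+1}x_{n+1}\mid\inppre]$ (from $\E[x_{n+1}x_{n+1}^\top]=\ddimId$ and independence of $x_{n+1}$ from $\inppre$) is a clean way to make the oddness-in-$\target$ and rotation-equivariance of $\uminNL$ transparent, whereas the paper extracts these properties directly from the argmin definition; the two are interchangeable.
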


The result is essentially the same as that of \Cref{thm:main_thm} --- note that the learning rate is potentially different, as it may depend on the function family $\calF$. The proof is analogous to the proof of \Cref{thm:main_thm}. First we prove the analogue of \Cref{lemma:eliminate_x}, defining $L(\w, \M)$ as in \Cref{sec:setup}:

\begin{lemma} \label{lemma:nonlinear_eliminate_x}
There exists a constant $C \geq 0$ such that
\begin{align}
L(\w, M)
& = C + \E_{\inppre \sim \seqdist} \|M_{:, 1:d}^\top \Gpre \w - \uminNL \|_2^2 \period
\end{align}
Here $\uminNL = \textup{argmin}_u \E_{x_{n + 1}, y_{n + 1}} [(u \cdot x_{n + 1} - y_{n + 1})^2 \mid \inppre]$.
\end{lemma}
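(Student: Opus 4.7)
The plan is to mimic the proof of \Cref{lemma:eliminate_x}, with the Bayes-posterior-mean argument (which gave $\ridgeD^\top x_{n+1} = \E[y_{n+1}\mid \inppre, x_{n+1}]$ in the linear setting) replaced by the first-order optimality condition that defines $\uminNL$. The key observation enabling this substitution is that $\uminNL$ depends only on $\inppre$, not on $x_{n+1}$ or $y_{n+1}$, so once we recognize that the transformer's prediction factors as $\phi^\top x_{n+1}$ with $\phi = M_{:,1:d}^\top \Gpre \w$ also depending only on $\inppre$, the relevant optimality condition is the one that minimizes the full conditional expectation over $(x_{n+1}, y_{n+1}) \mid \inppre$.

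First I would rewrite the loss as a nested expectation, taking the outer expectation over $\inppre$ and the inner one over $(x_{n+1}, y_{n+1}) \mid \inppre$:
\begin{align}
L(\w, M)
& = \E_{\inppre}\Big[\E_{x_{n+1}, y_{n+1}}\big[(\phi^\top x_{n+1} - y_{n+1})^2 \,\big|\, \inppre\big]\Big] \period
\end{align}
For fixed $\inppre$, define $h(\phi) = \E_{x_{n+1}, y_{n+1}}[(\phi^\top x_{n+1} - y_{n+1})^2 \mid \inppre]$. The first-order optimality of $\uminNL$ gives
\begin{align}
\E_{x_{n+1}, y_{n+1}}\big[(\uminNL^\top x_{n+1} - y_{n+1})\, x_{n+1} \,\big|\, \inppre\big] = 0 \period
\end{align}
Next, I would add and subtract $\uminNL^\top x_{n+1}$ inside the square and expand, exactly as in the proof of \Cref{lemma:eliminate_x}. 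Since $\phi - \uminNL$ depends only on $\inppre$, it can be pulled outside the inner expectation, and the cross term becomes $(\phi - \uminNL)^\top \cdot 0 = 0$ by the optimality condition above. The residual squared term $\E[(\uminNL^\top x_{n+1} - y_{n+1})^2 \mid \inppre]$ is independent of $\w, M$ and will be absorbed into the additive constant $C$ after taking the outer expectation.

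Finally, for the remaining quadratic term, I would use the fact that $x_{n+1} \sim \normal$ is independent of $\inppre$ with identity covariance to collapse
\begin{align}
\E_{x_{n+1}}\big[((\phi - \uminNL)^\top x_{n+1})^2 \,\big|\, \inppre\big] = \|\phi - \uminNL\|_2^2 \period
\end{align}
Substituting back $\phi = M_{:,1:d}^\top \Gpre \w$ and taking the outer expectation over $\inppre$ yields the claimed identity.

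I do not expect a serious obstacle: the only structural difference from \Cref{lemma:eliminate_x} is that the Bayes-optimal predictor no longer has a closed form like $\ridgeD$, so \Cref{assumption:rotation_invariant} is not actually invoked in this lemma --- it is only needed later when showing that $\uminNL$ behaves nicely enough to reduce the argument to something like \Cref{lemma:linear_scalar_identity}. The main point of care is simply verifying that $\uminNL$ is well-defined (the objective $h$ is strongly convex in $\phi$ because $\E[x_{n+1} x_{n+1}^\top \mid \inppre] = \ddimId$) and that the cross term calculation indeed goes through for any joint distribution of $(x_{n+1}, y_{n+1}) \mid \inppre$, which it does because the optimality condition is precisely what makes the inner product of $(\phi - \uminNL)$ with $\E[x_{n+1}(\uminNL^\top x_{n+1} - y_{n+1}) \mid \inppre]$ vanish.
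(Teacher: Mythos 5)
Your proposal is correct and follows essentially the same route as the paper's proof: condition on $\inppre$, invoke the first-order optimality of $\uminNL$ to kill the cross term after adding and subtracting $\uminNL \cdot x_{n+1}$, and use $x_{n+1} \sim \normal$ to collapse the quadratic term to a squared Euclidean norm. Your side remark that \Cref{assumption:rotation_invariant} is not needed at this stage is also accurate; it enters only later in \Cref{lemma:get_rid_of_even_y_powers} and \Cref{lemma:nonlinear_scalar_multiples}.
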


Next, in the proof of \Cref{lemma:replace_ridge_target}, we used the fact that odd-degree polynomials of the $y_i$ have expectation $0$ --- the corresponding lemma in our current setting is as follows:

\begin{lemma} \label{lemma:get_rid_of_even_y_powers}
For even integers $k$ and for $i \in [n]$, $\E[y_i^k \uminNL \mid X] = 0$. This also holds with $y_i^k$ replaced by an even-degree monomial of the $y_i$. 

Additionally, for odd integers $k$ and for $i \in [n]$, $\E[y_i^k \mid X] = 0$. This also holds with $y_i^k$ replaced by an odd-degree monomial of the $y_i$.
\end{lemma}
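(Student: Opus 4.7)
The plan is to exploit a single $\mathbb{Z}/2$ symmetry: conditional on $X$, the vector $\target$ is symmetric under negation, $\target \mid X \stackrel{d}{=} -\target \mid X$, and $\uminNL$ is an odd function of $\target$ given $X$. Both statements of the lemma then collapse into parity arguments against this symmetry. The conditional symmetry of $\target$ comes from coupling the two symmetry hypotheses available to us: Assumption~\ref{assumption:rotation_invariant} gives that $f$ and $-f$ have the same law, and Gaussianity gives $\epsilon_i \stackrel{d}{=} -\epsilon_i$ independently. Since $y_i = f(x_i) + \epsilon_i$, this promotes to a joint distributional identity $(f, X, \target) \stackrel{d}{=} (-f, X, -\target)$, from which $\target \mid X \stackrel{d}{=} -\target \mid X$ follows by marginalization.

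The odd-$k$ statement is then essentially immediate: any odd-degree monomial $P$ of the $y_i$ satisfies $P(-\target) = -P(\target)$, so
\begin{align*}
\E[P(\target) \mid X]
& = \E[P(-\target) \mid X] = -\E[P(\target) \mid X] \comma
\end{align*}
forcing $\E[P(\target) \mid X] = 0$. For the even-$k$ statement, the additional work is to verify the oddness of $\uminNL$ in $\target$, i.e.\ $\uminNL(X, -\target) = -\uminNL(X, \target)$ almost surely. I would first simplify $\uminNL$: the first-order condition for the inner least-squares objective, together with $x_{n+1} \sim \normal$ being independent of $\inppre$, yields $\E[x_{n+1} x_{n+1}^\top \mid \inppre] = \ddimId$ and hence
\begin{align*}
\uminNL
& = \E[y_{n + 1} x_{n + 1} \mid \inppre] = \E[f(x_{n + 1}) x_{n + 1} \mid \inppre] \comma
\end{align*}
where the noise term drops because $\epsilon_{n + 1}$ is mean-zero and independent of $x_{n + 1}$ and $\inppre$. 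Now the joint distributional identity $(f, X, \target) \stackrel{d}{=} (-f, X, -\target)$ gives that the conditional law of $f(x_{n + 1})$ given $(X, \target)$ equals the conditional law of $-f(x_{n + 1})$ given $(X, -\target)$; integrating against $x_{n + 1}$ through this identity yields $\uminNL(X, -\target) = -\uminNL(X, \target)$.

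Combining these two pieces closes out the first claim: for any even-degree monomial $P$ of the $y_i$,
\begin{align*}
\E[P(\target) \uminNL \mid X]
& = \E[P(-\target) \uminNL(X, -\target) \mid X] = -\E[P(\target) \uminNL(X, \target) \mid X] \comma
\end{align*}
which forces $\E[P(\target) \uminNL \mid X] = 0$. The main obstacle I anticipate is making the ``swap the conditioning variable'' step rigorous, since $\calF$ may be infinite-dimensional and Assumption~\ref{assumption:rotation_invariant} only supplies the measure-theoretic statement $\Prob(E) = \Prob(-E)$, with no explicit density for $f$. The cleanest way around this is to bypass densities entirely: verify directly from the joint identity $(f, X, \target) \stackrel{d}{=} (-f, X, -\target)$ that $\E[\Phi(f(x_{n+1}), X, \target)] = \E[\Phi(-f(x_{n+1}), X, -\target)]$ for every bounded measurable test function $\Phi$, and then conclude by uniqueness of regular conditional expectations. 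Everything else is a routine parity computation.
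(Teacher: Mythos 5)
Your proposal is correct and uses the same underlying idea as the paper: pair each outcome $(f,\epsilon_1,\dots,\epsilon_n)$ with $(-f,-\epsilon_1,\dots,-\epsilon_n)$, which by \Cref{assumption:rotation_invariant} and Gaussian symmetry of the noise is equally likely, flips the sign of $\target$ while fixing $X$, and flips the sign of $\uminNL$, so even-degree monomials times $\uminNL$ and odd-degree monomials both average to zero. Your version is somewhat more careful than the paper's — you correctly observe that $\uminNL$ is determined by $\inppre=(X,\target)$ (not by the individual outcome) and therefore the sign flip must be verified as an oddness property $\uminNL(X,-\target)=-\uminNL(X,\target)$, which you establish cleanly via the explicit formula $\uminNL=\E[f(x_{n+1})x_{n+1}\mid\inppre]$ and the joint distributional identity $(f,X,\target)\stackrel{d}{=}(-f,X,-\target)$.
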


\begin{proof} [Proof of \Cref{lemma:get_rid_of_even_y_powers}]
This follows from \Cref{assumption:rotation_invariant}. This is because for each outcome (i.e. choice of $f$ and $\epsilon_1, \ldots, \epsilon_n$) which leads to $(x_1, y_1, \ldots, x_n, y_n)$, the corresponding outcome $-f, -\epsilon_1, \ldots, -\epsilon_n$ which leads to $(x_1, -y_1, \ldots, x_n, -y_n)$ is equally likely. The $\uminNL$ which is obtained from the second outcome is the negative of the $\uminNL$ which is obtained from the first outcome. If $k$ is even, then $y_i^k$ is the same under both outcomes since $k$ is even, and the average of $y_i^k \uminNL$ under these two outcomes is $0$. Additionally, if $k$ is odd, then $y_i^k$ under the second outcome is the negative of $y_i^k$ under the first outcome, and the average of $y_i^k$ under these two outcomes is $0$. This completes the proof of the lemma.
\end{proof}

Next, we show the analogue of \Cref{lemma:linear_scalar_identity}.

\begin{lemma} \label{lemma:nonlinear_scalar_multiples}
$\E_{\inppre \sim \seqdist} [X^\top \target \target^\top X]$ and $\E_{\inppre \sim \seqdist} [X^\top \target \uminNL^\top]$ are scalar multiples of the identity. Thus, 
\begin{align}
\E_{\inppre \sim \seqdist} [X^\top \target \uminNL^\top] = \eta \E_{\inppre \sim \seqdist} [X^\top \target \target^\top X]
\end{align}
where $\eta = \frac{\E_\calD[\uminNL^\top X^\top \target]}{\E_\calD[\target^\top X X^\top \target]}$.
\end{lemma}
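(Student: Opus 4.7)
The overall strategy is to mimic the proof of \Cref{lemma:linear_scalar_identity}, leveraging the rotational symmetry in \Cref{assumption:rotation_invariant} in place of the rotational invariance of the isotropic Gaussian prior on $\w$. The new difficulty is that $\uminNL$ is defined implicitly as a Bayes-optimal linear predictor rather than via a closed-form ridge expression, so its transformation law under rotations of the covariates must be derived from first principles.

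The first step I would carry out is to establish the equivariance $\uminNL(\phi_R(\inppre)) = R \, \uminNL(\inppre)$, where $\phi_R$ denotes the map $(x_1, y_1, \ldots, x_n, y_n) \mapsto (R x_1, y_1, \ldots, R x_n, y_n)$ for any rotation $R \in \R^{d \times d}$. The key input is that the joint law of $(\inppre, x_{n+1}, y_{n+1})$ is preserved by the measure-preserving map $T: (\inppre, x_{n+1}, y_{n+1}) \mapsto (\phi_R(\inppre), R x_{n+1}, y_{n+1})$: the $x_i$'s are i.i.d.\ standard Gaussian which is rotation-invariant, the $\epsilon_i$'s are independent of everything else, and by \Cref{assumption:rotation_invariant} the distribution of $f$ equals that of $f \circ R$. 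Passing this invariance through to conditional expectations gives $\E[(u \cdot x_{n+1} - y_{n+1})^2 \mid \inppre = \phi_R(D)] = \E[((R^\top u) \cdot x_{n+1} - y_{n+1})^2 \mid \inppre = D]$ for every $u$, and minimizing in $u$ via the substitution $v = R^\top u$ yields the claimed equivariance.

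With equivariance in hand, the rest is direct. Under the rotation $x_i \mapsto R x_i$, the matrix $X$ becomes $X R^\top$ so $X^\top$ becomes $R X^\top$; $\target$ is unchanged; and $\uminNL$ becomes $R \uminNL$. Therefore $X^\top \target \uminNL^\top$ transforms into $R(X^\top \target \uminNL^\top) R^\top$, and similarly $X^\top \target \target^\top X$ transforms into $R(X^\top \target \target^\top X) R^\top$. Because the joint law of $(X, \target)$ is preserved, each of the two expectations $A$ satisfies $A = R A R^\top$ for every rotation $R$, and hence must be a scalar multiple of $\ddimId$. Writing them as $c_1 \ddimId$ and $c_2 \ddimId$ and taking traces gives $c_1 d = \E[\uminNL^\top X^\top \target]$ and $c_2 d = \E[\target^\top X X^\top \target]$, so $\eta = c_1/c_2$ is exactly the ratio in the statement.

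The main obstacle is the equivariance step, because $\uminNL$ has no closed form and depends on the posterior over $f$ given $\inppre$; one must carefully justify the transfer from the unconditional invariance under $T$ to the required identity of conditional expectations at an arbitrary fixed $D$. Once this is in place, the remainder is bookkeeping essentially identical to the linear case, and, notably, the negation-symmetry clause of \Cref{assumption:rotation_invariant} plays no role here (it enters only through \Cref{lemma:get_rid_of_even_y_powers}).
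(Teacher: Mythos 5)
Your proposal is correct and follows essentially the same route as the paper's proof: both mirror the argument of \Cref{lemma:linear_scalar_identity}, using rotation-invariance of the $x_i$ together with the equivariance $\uminNL \mapsto R\uminNL$ under $x_i \mapsto Rx_i$ to conclude $A = RAR^\top$ for both matrices, then taking traces. The one place you go beyond the paper is the equivariance step, which the paper asserts in a single sentence (``the joint distribution of $y_1,\ldots,y_{n+1}$ does not change\ldots meaning $\uminNL$ is replaced by $R\uminNL$''), whereas you derive it cleanly from the measure-preserving map $T$ and the substitution $v = R^\top u$ in the conditional minimization---this is the right way to make the paper's terse claim rigorous, and your remark that the negation-symmetry clause of \Cref{assumption:rotation_invariant} is not needed here is also correct.
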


The proof of \Cref{lemma:nonlinear_scalar_multiples} is nearly identical to that of \Cref{lemma:linear_scalar_identity}, with \Cref{assumption:rotation_invariant} used where appropriate. For completeness, we include the proof in \Cref{appendix:missing_proofs_nonlinear}. We now state the analogue of \Cref{lemma:replace_ridge_target}:

\begin{lemma} \label{lemma:nonlinear_replace_ridge_target}
There exists a constant $C_1 \geq 0$ which is independent of $\w, M$, such that
\begin{align}
L(\w, M)
& = C + \E_{\inppre \sim \seqdist} \|M_{:, 1:d}^\top \Gpre \w - \eta X^\top \target \|_2^2 \comma
\end{align}
where $\eta = \frac{\E_\calD[\uminNL^\top X^\top \target]}{\E_\calD[\target^\top X X^\top \target]}$.
\end{lemma}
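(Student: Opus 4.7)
The plan is to follow the structure of the proof of \Cref{lemma:replace_ridge_target}. By \Cref{lemma:nonlinear_eliminate_x}, we already have $L(\w,M) = C + \E_{\inppre \sim \seqdist} \|M_{:,1:d}^\top \Gpre \w - \uminNL\|_2^2$, so it suffices to show that replacing $\uminNL$ by $\eta X^\top \target$ changes the value by only a constant independent of $(\w, M)$. Expanding both squared norms and canceling the common quadratic in $M_{:,1:d}^\top \Gpre \w$, this reduces to establishing
\[
\E_{\inppre \sim \seqdist}[\w^\top \Gpre M_{:,1:d} \uminNL] \;=\; \eta \, \E_{\inppre \sim \seqdist}[\w^\top \Gpre M_{:,1:d} X^\top \target]
\]
for all $\w \in \headdim$ and $M \in \matdim$. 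Writing $A = M_{:,1:d}^\top$ and using that $\w$ is arbitrary, this is equivalent to the vector identity $\E_{\inppre \sim \seqdist}[\Gpre A^\top \uminNL] = \eta \, \E_{\inppre \sim \seqdist}[\Gpre A^\top X^\top \target]$ for every $A \in \R^{d \times (d+1)}$.

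Next, I would mirror the odd-power simplification from \Cref{sec:main_result_linear}. Expanding $\Gpre = \sum_i \colvec{x_i}{y_i}\colvec{x_i}{y_i}^\top$ produces four types of terms in $\E_{\inppre \sim \seqdist}[\Gpre A^\top \uminNL]$, and \Cref{lemma:get_rid_of_even_y_powers} eliminates the same two types that vanished in the linear case: the contribution $x_i x_i^\top A_{:,1:d}^\top \uminNL$ is a degree-$0$ monomial in the $y$'s times $\uminNL$, and $y_i^2 A_{:,d+1}^\top \uminNL$ is an even-degree monomial times $\uminNL$; both have expectation zero by the first part of \Cref{lemma:get_rid_of_even_y_powers}. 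On the other side, the analogous unwanted terms in $\E_{\inppre \sim \seqdist}[\Gpre A^\top X^\top \target]$ are pure odd-degree monomials of the $y$'s, which vanish by the second part of the same lemma.

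After these cancellations, both $\E_{\inppre \sim \seqdist}[\Gpre A^\top \uminNL]$ and $\eta \E_{\inppre \sim \seqdist}[\Gpre A^\top X^\top \target]$ reduce to expressions linear in $\E_{\inppre \sim \seqdist}[X^\top \target \uminNL^\top]$ and $\E_{\inppre \sim \seqdist}[X^\top \target \target^\top X]$ respectively, with the free matrix $A$ acting by right multiplication. Letting $A$ range over $\R^{d \times (d+1)}$, the identity collapses to the single matrix equality
\[
\E_{\inppre \sim \seqdist}[X^\top \target \uminNL^\top] \;=\; \eta \, \E_{\inppre \sim \seqdist}[X^\top \target \target^\top X],
\]
which is exactly what \Cref{lemma:nonlinear_scalar_multiples} provides with the stated choice of $\eta$.

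The main subtlety, and the only nontrivial departure from the linear proof, is that the parity argument in \Cref{sec:main_result_linear} was justified by the fact that every entry of $\ridgeD = (X^\top X + \sigma^2 I)^{-1} X^\top \target$ is an odd-degree polynomial in the $y_i$, which is no longer true here because $\uminNL$ depends implicitly on the random nonlinear $f$. The replacement is the sign-flip property of $\uminNL$ under $(y_1,\ldots,y_n) \mapsto (-y_1,\ldots,-y_n)$ guaranteed by the negation-symmetry half of \Cref{assumption:rotation_invariant} together with the symmetry of the Gaussian noise, which is precisely what \Cref{lemma:get_rid_of_even_y_powers} packages. Once this ingredient is in place, every subsequent algebraic step is identical to the linear proof.
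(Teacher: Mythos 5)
Your proof is correct and takes essentially the same approach as the paper: use \Cref{lemma:nonlinear_eliminate_x}, decompose $\Gpre$ into blocks, eliminate the $x_i x_i^\top$ and $y_i^2$ blocks via \Cref{lemma:get_rid_of_even_y_powers}, and invoke \Cref{lemma:nonlinear_scalar_multiples} for the surviving cross terms. The only difference is cosmetic: you reduce to the key identity $\E_{\inppre \sim \seqdist}[\Gpre A^\top \uminNL] = \eta\,\E_{\inppre \sim \seqdist}[\Gpre A^\top X^\top \target]$ by expanding the two squared norms and canceling the shared quadratic in $M_{:,1:d}^\top \Gpre \w$, whereas the paper (via \Cref{lemma:replace_ridge_target}) equates the $\w$- and $A$-gradients of the two loss surrogates and appeals to the fundamental theorem of calculus.
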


\Cref{thm:main_nonlinear_result} now follows from \Cref{lemma:nonlinear_replace_ridge_target} because $M_{:, 1:d}^\top \Gpre \w$ is the weight vector for the effective linear predictor implemented by the transformer. All missing proofs are in \Cref{appendix:missing_proofs_nonlinear}.

\section{Conclusion}

We theoretically study one-layer transformers with linear self-attention trained on noisy linear regression tasks with randomly generated data. We confirm the empirical finding of \citet{vonoswald2022transformers} by mathematically showing that the global minimum of the pre-training loss for one-layer transformers with linear self-attention corresponds to one step of GD on a least-squares linear regression objective, when the covariates are from an isotropic Gaussian distribution. We find that when the covariates are not from an isotropic Gaussian distribution, the global minimum of the pre-training loss instead corresponds to pre-conditioned GD, while if the covariates are from an isotropic Gaussian distribution and the response is obtained from a \textit{nonlinear} target function, then the global minimum of the pre-training loss will still correspond to one step of GD on a least-squares linear regression objective. We study single-head linear self-attention layers --- it is an interesting direction for future work to study the global minima of the pre-training loss for a multi-head linear self-attention layer. Another interesting direction is to study the algorithms learned by multi-layer transformers when the response is obtained from a \textit{nonlinear} target function. We note that \citet{ahn2023transformers} have studied the case of multi-layer transformers when the target function is linear. They show that for certain restricted parameterizations of multi-layer linear transformers, the global minima or critical points of the pre-training loss correspond to interpretable gradient-based algorithms.

\section*{Acknowledgments}

The authors would like to thank the support from NSF IIS 2211780 and a gift from Open Philanthropy.

\bibliographystyle{plainnat}
\bibliography{references}

\newpage
\appendix

\section{Missing Proofs from \Cref{sec:main_result_linear}} \label{appendix:linear_missing_proofs}

\begin{proof}[Proof of \Cref{lemma:linear_scalar_identity}]
For convenience let $M(X) = \E_{\inppre \sim \seqdist} [\target \target^\top \mid X]$ --- we use this notation to make the dependence on $X$ clear. Then, the $(i, j)$-th entry of $M(X)$ is $\E[(\w \cdot x_i + \epsilon_i)(\w \cdot x_j + \epsilon_j) \mid X]$ and this is equal to $\E[(\w \cdot x_i)(\w \cdot x_j) \mid X]$ for $i \neq j$, and $\E[(\w \cdot x_i)(\w \cdot x_j) \mid X] + \sigma^2$ for $i = j$. If we perform the change of variables $x_i \mapsto Rx_i$ for a fixed rotation matrix $R$ and all $i \in [n]$, then $\E_\w [(\w \cdot Rx_i)(\w \cdot Rx_j) \mid X] = \E_\w[(\w \cdot x_i)(\w \cdot x_j) \mid X]$ because $\w \sim \normal$ and $\normal$ is a rotationally invariant distribution. In other words, we have $M(XR^\top) = M(X)$. Thus, for any rotation matrix $R$,
\begin{align}
\E_X[X^\top M(X)X]
& = \E_X[(XR)^\top M(XR^\top) (XR^\top)]
& \tag{By rotational invariance of dist. of $x_i$} \\
& = R^\top \E_X[X^\top M(XR^\top) X] R^\top \\
& = R^\top \E_X[X^\top M(X) X] R \period
& \tag{Because $M(XR^\top) = M(X)$}
\end{align}
This implies that $\E_X[X^\top M(X) X] = \E_{\inppre \sim \seqdist}[X^\top \target \target^\top X]$ is a scalar multiple of the identity matrix.

Next, we consider $\E_{\inppre \sim \seqdist}[\target \ridgeD^\top \mid X]$, which we write for convenience as $J(X)$. Similarly to the above proof, we use the observation that if we make the change of variables $x_i \to Rx_i$, then the joint distribution of $y_1, \ldots, y_n, y_{n + 1}$ is unchanged due to the rotational invariance of $\w$. Additionally, if we write $\ridgeD$ as $\ridgeD(x_1, y_1, \ldots, x_n, y_n)$ to emphasize the fact that it is a function of $x_1, y_1, \ldots, x_n, y_n$, then
\begin{align}
\ridgeD(Rx_1, y_1, Rx_2, y_2, \ldots, Rx_n, y_n)
& = R \ridgeD(x_1, y_1, x_2, y_2, \ldots, x_n, y_n)
\end{align}
because $\ridgeD$ is the minimizer of $F(\w) = \|\w^\top X - \target\|_2^2 + \sigma^2 \|\w\|_2^2$, and if all the $x_i$ are rotated by $R$, then the minimizer of $F$ will also be rotated by $R$. Thus,
\begin{align}
J(XR^\top)
 = \E_{\inppre \sim \seqdist}[\target \ridgeD^\top \mid XR^\top] 
 = \E_{\inppre \sim \seqdist}[\target \ridgeD^\top R^\top \mid X] 
 = J(X) R^\top \period
\end{align}
We can use this to show that $\E_{\inppre \sim \seqdist}[X^\top \target \ridgeD^\top]$ is a scalar multiple of the identity. Letting $R$ be a rotation matrix, we obtain
\begin{align}
\E_{\inppre \sim \seqdist}[X^\top \target \ridgeD^\top]
& = \E_{\inppre \sim \seqdist} \E[X^\top \target \ridgeD^\top \mid X] \\
& = \E_{\inppre \sim \seqdist} [X^\top \E[\target \ridgeD^\top \mid X]] \\
& = \E_X [X^\top J(X)]
& \label{eq:earlier} \\
& = \E_X [(XR^\top)^\top J(XR^\top)]
& \tag{By rotational invariance of dist. of $x_i$} \\
& = \E_X [R X^\top J(X) R^\top]
& \tag{B.c. $J(XR^\top) = J(X)R^\top$} \\
& = R \E_X[X^\top J(X)] R^\top \\
& = R \E_{\inppre \sim \seqdist}[X^\top \target \ridgeD^\top] R^\top \period
& \tag{By \Cref{eq:earlier}}
\end{align}
This implies that $\E_{\inppre \sim \seqdist} [X^\top \target \ridgeD^\top]$ is a scalar multiple of the identity.
\end{proof}

\begin{proof} [Proof of \Cref{lemma:matrices_equal}]
By \Cref{lemma:linear_scalar_identity}, we have $\E_{\inppre \sim \seqdist}[X^\top \target \target^\top X] = c_1 I$ and $\E_{\inppre \sim \seqdist}[X^\top \target \ridgeD^\top] = c_2 I$ for some scalars $c_1, c_2$. Taking the traces of both matrices gives us
\begin{align}
c_1 d 
= \tr(\E_{\inppre \sim \seqdist}[X^\top \target \target^\top X]) 
= \E_{\inppre \sim \seqdist}[\tr(X^\top \target \target^\top X)] 
= \E_{\inppre \sim \seqdist}[\target^\top X X^\top \target] \comma
\end{align}
and similarly,
\begin{align}
c_2 d
= \tr(\E_{\inppre \sim \seqdist}[X^\top \target \ridgeD^\top])
= \E_{\inppre \sim \seqdist}[\tr(X^\top \target \ridgeD^\top)]
= \E_{\inppre \sim \seqdist}[\ridgeD^\top X^\top \target] \period
\end{align}
By the definition of $\eta$ as $\frac{\E_{\inppre \sim \seqdist}[\ridgeD^\top X^\top \target]}{\E_{\inppre \sim \seqdist}[\target^\top X X^\top \target]}$, we have
\begin{align}
\eta \E_{\inppre \sim \seqdist}[X^\top \target \target^\top X] - \E_{\inppre \sim \seqdist}[X^\top \target \ridgeD^\top]
& = \eta c_1 I - c_2 I \\
& = \eta \cdot \frac{\E_{\inppre \sim \seqdist}[\target^\top X X^\top \target]}{d} I - \frac{\E_{\inppre \sim \seqdist}[\ridgeD^\top X^\top \target]}{d} I \\
& = \frac{\E_{\inppre \sim \seqdist}[\ridgeD^\top X^\top \target]}{d} I - \frac{\E_{\inppre \sim \seqdist}[\ridgeD^\top X^\top \target]}{d} I \\
& = 0 \comma
\end{align}
completing the proof of the lemma.
\end{proof}

\begin{proof}[Proof of \Cref{lemma:replace_ridge_target}]
For convenience, we write $A := M_{:, 1:d}^\top$. Then, we wish to show that for all $A \in \R^{d \times (d + 1)}$ and $\w \in \R^{d + 1}$, we have
\begin{align}
\E_{\inppre \sim \seqdist} \|A \Gpre \w - \ridgeD\|_2^2 
= C + \E_{\inppre \sim \seqdist} \| A \Gpre \w - \eta X^\top \target \|_2^2
\end{align}
for some constant $C$ independent of $A$ and $\w$. Define
\begin{align}
J_1(A, \w) 
& = \E_{\inppre \sim \seqdist} \|A \Gpre \w - \ridgeD\|_2^2
\end{align}
and
\begin{align}
J_2(A, \w)
& = \E_{\inppre \sim \seqdist} \|A \Gpre \w - \eta X^\top \target\|_2^2 \period
\end{align}
To show that $J_1(A, \w)$ and $J_2(A, \w)$ are equal up to a constant, it suffices to show that their gradients are identical, since $J_1$ and $J_2$ are defined everywhere on $\R^{d + 1} \times \R^{(d + 1) \times (d + 1)}$ and by the fundamental theorem of calculus.

\textbf{Gradients With Respect to $\w$.} First, we analyze the gradients with respect to $\w$. We have
\begin{align}
\nabla_\w J_1(A, \w)
& = 2 \E_{\inppre \sim \seqdist} \Gpre A^\top (A \Gpre \w - \ridgeD)
\end{align}
and
\begin{align}
\nabla_\w J_2(A, \w)
& = 2 \E_{\inppre \sim \seqdist} \Gpre A^\top (A \Gpre \w - \eta X^\top \target)
\end{align}
where we use the convention that the gradient with respect to $\w$ has the same shape as $\w$. To show that $\nabla_\w J_1(A, \w) = \nabla_\w J_2(A, \w)$, it suffices to show that $\E_{\inppre \sim \seqdist} \Gpre A^\top \ridgeD = \eta \E_{\inppre \sim \seqdist} \Gpre A^\top X^\top \target$. Observe that we can write $\Gpre$ as
\begin{align}
\Gpre
 = \sum_{i = 1}^n \colvec{x_i}{y_i} \colvec{x_i}{y_i}^\top 
 = \sum_{i = 1}^n \left[\begin{array}{cc} x_i x_i^\top & y_i x_i \\ y_i x_i^\top & y_i^2 \end{array} \right] \period
\end{align}
Given $X$, the expected value of any odd monomial of the $y_i$ is equal to $0$, since $w$ and $\epsilon_i$ are independent of $X$ and have mean $0$. Thus, we can ignore the blocks corresponding to $x_i x_i^\top$ and $y_i^2$, since the entries of $\ridgeD$ and $X^\top \target$ contain odd monomials of the $y_i$ once $X$ is fixed. 

We now proceed in two steps. First, we deal with the terms corresponding to the lower left block of $\Gpre$ and show that their respective contributions to $\E_{\inppre \sim \seqdist} \Gpre A^\top \ridgeD$ and $\eta \E_{\inppre \sim \seqdist} \Gpre A^\top X^\top \target$ are equal. For $\E_{\inppre \sim \seqdist} \Gpre A^\top \ridgeD$, these terms contribute:
\begin{align}
\E_{\inppre \sim \seqdist} \sum_{i = 1}^n y_i x_i^\top (A^\top)_{1:d, :} \ridgeD
& = \E_{\inppre \sim \seqdist} \target^\top X A_{:, 1:d}^\top \ridgeD \\
& = \E_{\inppre \sim \seqdist} \tr(\ridgeD \target^\top X A_{:, 1:d}^\top) \\
& = \tr(\E_{\inppre \sim \seqdist}[\ridgeD \target^\top X] A_{:, 1:d}^\top)
 \comma
\end{align}
while for $\eta \E_{\inppre \sim \seqdist} \Gpre A^\top X^\top \target$, these terms contribute:
\begin{align}
\eta \E_{\inppre \sim \seqdist} \sum_{i = 1}^n y_i x_i^\top A_{:, 1:d}^\top X^\top \target 
& = \eta \E_{\inppre \sim \seqdist} \target^\top X A_{:, 1:d}^\top X^\top \target \\
& = \eta \E_{\inppre \sim \seqdist} \tr(X^\top \target \target^\top X A_{:, 1:d}^\top) \\
& = \tr(\E_{\inppre \sim \seqdist}[\eta X^\top \target \target^\top X] A_{:, 1:d}^\top) \comma
\end{align}
and since $\E_{\inppre \sim \seqdist}[\ridgeD \target^\top X] = \E_{\inppre \sim \seqdist}[\eta X^\top \target \target^\top X]$ by \Cref{lemma:matrices_equal}, these two contributions are equal.

Second, we deal with the terms corresponding to the upper right block of $\Gpre$. For $\E_{\inppre \sim \seqdist} \Gpre A^\top \ridgeD$ these terms contribute
\begin{align}
\E_{\inppre \sim \seqdist} \sum_{i = 1}^n y_i x_i (A^\top)_{(d + 1),:} \ridgeD
& = \E_{\inppre \sim \seqdist} X^\top \target A_{:, (d + 1)}^\top \ridgeD \\
& = \E_{\inppre \sim \seqdist} X^\top \target \ridgeD^\top A_{:, (d + 1)} \comma
& \tag{B.c. dot product $A_{:, (d + 1)}^\top \ridgeD$ is commutative}
\end{align}
while for $\eta \E_{\inppre \sim \seqdist} \Gpre A^\top X^\top \target$, these terms contribute
\begin{align}
\eta \E_{\inppre \sim \seqdist} \sum_{i = 1}^n y_i x_i (A^\top)_{(d + 1), :} X^\top \target
& = \eta \E_{\inppre \sim \seqdist} X^\top \target A_{:, (d + 1)}^\top X^\top \target \\
& = \eta \E_{\inppre \sim \seqdist} X^\top \target \target^\top X A_{:, (d + 1)} \comma
& \tag{B.c. dot product $A_{:, (d + 1)}^\top X^\top \target$ is commutative}
\end{align}
and again these contributions are equal by \Cref{lemma:matrices_equal}. In summary, we have shown that for $\E_{\inppre \sim \seqdist} \Gpre A^\top \ridgeD$ and $\eta \E_{\inppre \sim \seqdist} \Gpre A^\top X^\top \target$, the contributions corresponding to the lower left and upper right blocks of $\Gpre$ are equal, meaning that $\E_{\inppre \sim \seqdist} \Gpre A^\top \ridgeD$ and $\eta \E_{\inppre \sim \seqdist} \Gpre A^\top X^\top \target$ are equal. This shows that $\nabla_\w J_1(A, \w)$ and $\nabla_\w J_2(A, \w)$ are equal.

\textbf{Gradients With Respect to $A$.} We can compute the gradient of $J_1$ with respect to $A$ as follows:
\begin{align}
\nabla_A J_1(A, \w)
& = \E_{\inppre \sim \seqdist} \nabla_A \|A \Gpre \w - \ridgeD\|_2^2 \\
& = \E_{\inppre \sim \seqdist} \nabla_A (A\Gpre \w - \ridgeD)^\top (A \Gpre \w - \ridgeD) \\
& = 2 \E_{\inppre \sim \seqdist} (A \Gpre \w - \ridgeD) (\Gpre \w)^\top \comma
\end{align}
and we can similarly compute
\begin{align}
\nabla_A J_2(A, \w)
& = \E_{\inppre \sim \seqdist} \nabla_A \|A \Gpre \w - \eta X^\top \target\|_2^2 \\
& = \E_{\inppre \sim \seqdist} \nabla_A (A \Gpre \w - \eta X^\top \target)^\top (A \Gpre \w - \eta X^\top \target) \\
& = 2 \E_{\inppre \sim \seqdist} (A \Gpre \w - \eta X^\top \target) (\Gpre \w)^\top \period
\end{align}
Thus, to show that the gradients with respect to $A$ are equal, it suffices to show that $\E_{\inppre \sim \seqdist} \ridgeD \w^\top \Gpre = \eta \E_{\inppre \sim \seqdist} X^\top \target \w^\top \Gpre$ for all $\w$.

As before, we only need to consider the lower left and upper right blocks of $\Gpre$, since the entries of the other blocks have even powers of the $y_i$. First let us consider the lower left block. The contribution of the lower left block to $\E_{\inppre \sim \seqdist} \ridgeD \w^\top \Gpre$ is
\begin{align}
\E_{\inppre \sim \seqdist} \ridgeD \w_{(d + 1)} \sum_{i = 1}^n y_i x_i^\top
 = \E_{\inppre \sim \seqdist} \ridgeD \w_{(d + 1)} \target^\top X
 = w_{(d + 1)} \E_{\inppre \sim \seqdist} \ridgeD \target^\top X \comma
\end{align}
while the contribution of the lower left block to $\eta \E_{\inppre \sim \seqdist} X^\top \target \w^\top \Gpre$ is
\begin{align}
\E_{\inppre \sim \seqdist} \eta X^\top \target w_{(d + 1)} \sum_{i = 1}^n y_i x_i^\top
& = \eta w_{(d + 1)} \E_{\inppre \sim \seqdist} X^\top \target \target^\top X \comma
\end{align}
and these two are equal by \Cref{lemma:matrices_equal}. The contribution of the upper right block to $\E_{\inppre \sim \seqdist} \ridgeD \w^\top \Gpre$ is
\begin{align}
\E_{\inppre \sim \seqdist} \ridgeD \w_{1:d}^\top \sum_{i = 1}^n y_i x_i 
 = \E_{\inppre \sim \seqdist} \ridgeD \w_{1:d}^\top X^\top \target
 = (\E_{\inppre \sim \seqdist} \ridgeD \target^\top X) \w_{1:d} \comma
\end{align}
by the commutativity of the dot product $\w_{1:d}^\top X^\top \target$, while similarly, the contribution of the upper right block to $\eta \E_{\inppre \sim \seqdist} X^\top \target \w^\top \Gpre$ is
\begin{align}
\E_{\inppre \sim \seqdist} \eta X^\top \target \w_{1:d}^\top \sum_{i = 1}^n y_i x_i
= (\eta \E_{\inppre \sim \seqdist} X^\top \target \target^\top X) w_{1:d} \comma
\end{align}
and these two quantities are equal by \Cref{lemma:matrices_equal}. Thus, we have shown that $\nabla_A J_1(A, \w) = \nabla_A J_2(A, \w)$.

\textbf{Summary.} We have shown that $\nabla_\w J_1(A, \w) = \nabla_\w J_2(A, \w)$, and $\nabla_A J_1(A, \w) = \nabla_A J_2(A, \w)$. This completes the proof of the lemma.
\end{proof}

\begin{proof}[Proof of \Cref{thm:main_thm}]
This theorem follows from \Cref{lemma:eliminate_x} and \Cref{lemma:replace_ridge_target}. To see why, recall from \Cref{sec:setup} that the output of the transformer on the last token $\token_{n + 1}$ can be written as $\w^\top \Gpre M \token_{n + 1}$, where $M = \key^\top \query$ and $\w = \valueM^\top \head$. By \Cref{lemma:replace_ridge_target}, the pre-training loss is minimized when $M_{:, 1:d}^\top \Gpre \w = \eta X^\top \target$ almost surely (over the randomness of $X$ and $\target$), and when this holds, the output of the transformer on the last token is
\begin{align}
\w^\top \Gpre M \token_{n + 1}
= \w^\top \Gpre M_{:, 1:d} x_{n + 1}
= (\eta X^\top \target)^\top x_{n + 1}
= \eta \sum_{i = 1}^n y_i x_i^\top x_{n + 1} \comma
\end{align}
as desired.
\end{proof}

\section{Missing Proofs for \Cref{sec:different_cov}} \label{appendix:missing_proofs_different_cov}

\begin{proof}[Proof of \Cref{thm:main_thm_skewed_covariance}]
As discussed in \Cref{sec:different_cov}, we can write $x_i = \Sigma^{1/2} u_i$, where $u_i \sim \normal$. Furthermore, we can let $U \in \R^{n \times d}$ be the matrix whose $i^{\textup{th}}$ row is $u_i^\top$. Note that $X = U \Sigma^{1/2}$, since $\Sigma$ is a symmetric positive-definite matrix. Given $\inppre = (x_1, y_1, \ldots, x_n, y_n)$, we can define $\ridgeSkew = (U^\top U + \sigma^2 I)^{-1} U^\top \target$ --- this would be the solution to ridge regression if we were given the $u_i$ and $y_i$. Using $X = U \Sigma^{1/2}$ which implies $U = X \Sigma^{-1/2}$, we obtain
\begin{align}
\ridgeSkew 
& = (\Sigma^{-1/2} X^\top X \Sigma^{-1/2} + \sigma^2 I)^{-1} \Sigma^{-1/2} X^\top \target \\
& = (\Sigma^{1/2} \Sigma^{-1/2} X^\top X \Sigma^{-1/2} + \sigma^2 \Sigma^{1/2})^{-1} X^\top \target \\
& = (X^\top X \Sigma^{-1/2} + \sigma^2 \Sigma^{1/2})^{-1} X^\top \target \\
& = \Sigma^{1/2} (X^\top X + \sigma^2 \Sigma)^{-1} X^\top \target \period
\end{align}
We now change variables in order to reduce \Cref{thm:main_thm_skewed_covariance} to \Cref{thm:main_thm}, using the fact that $u_i$ and $y_i$ together are from the same distribution as the data studied in \Cref{thm:main_thm}. We can write the loss as
\begin{align}
L(\key, \query, \valueM, \head)
& = \E_{\inpseq \sim \seqdist} [(y_{n + 1} - \head^\top \outvec_{n + 1})^2] \\
& = \E_{\inppre, x_{n + 1}} \Big[ \E_{y_{n + 1}}[(y_{n + 1} - \head^\top \outvec_{n + 1})^2 \mid x_{n + 1}, \calD]\Big] \period
\end{align}
The solution to ridge regression given the $u_i$ and $y_i$ is $\ridgeSkew$, meaning $\E[y_{n + 1} \mid u_{n + 1}, \inppre] = \ridgeSkew^\top u_{n + 1}$ and therefore $\E[y_{n + 1} \mid x_{n + 1}, \inppre] = \ridgeSkew^\top u_{n + 1}$, since $u_{n + 1} = \Sigma^{-1/2} x_{n + 1}$, which is an invertible function of $x_{n + 1}$. Thus,
\begin{align}
\E_{y_{n + 1}} [(y_{n + 1} - \head^\top \outvec_{n + 1})^2 \mid x_{n + 1}, \inppre]
& = \E_{y_{n + 1}} [(y_{n + 1} - \ridgeSkew^\top u_{n + 1})^2 \mid x_{n + 1}, \inppre] \\
& \nextlinespace + \E_{y_{n + 1}} [(\ridgeSkew^\top u_{n + 1} - \head^\top \outvec_{n + 1})^2 \mid x_{n + 1}, \inppre] \\
& \nextlinespace + 2 \E_{y_{n + 1}} [(y_{n + 1} - \ridgeSkew^\top u_{n + 1})(\ridgeSkew^\top u_{n + 1} - \head^\top \token_{n + 1}) \mid x_{n + 1}, \inppre] \period
\end{align}
Now, note that $\E_{y_{n + 1}} [(y_{n + 1} - \ridgeSkew^\top u_{n + 1})(\ridgeSkew^\top u_{n + 1} - \head^\top \token_{n + 1}) \mid x_{n + 1}, \inppre] = 0$ since $\ridgeSkew^\top u_{n + 1} - \head^\top \outvec_{n + 1}$ is fully determined by $x_{n + 1}$ and $\inppre$, and $\E[y_{n + 1} - \ridgeSkew^\top u_{n + 1} \mid x_{n + 1}, \inppre] = 0$ as mentioned above. Additionally, we can write $\E_{y_{n + 1}} [(y_{n + 1} - \ridgeSkew^\top u_{n + 1})^2 \mid x_{n + 1}, \inppre]$ as $C_{x_{n + 1}, \inppre}$ (denoting a constant that depends only on $x_{n + 1}$ and $\inppre$) since it is independent of the parameters $\key, \query, \valueM, \head$. Thus, taking the expectation over $x_{n + 1}$ and $\inppre$, for some constant $C$ which is independent of $\key, \query, \valueM, \head$, we have
\begin{align}
L(\key, \query, \valueM, \head)
& = C + \E_{\inppre, x_{n + 1}} (\ridgeSkew^\top u_{n + 1} - \head^\top \outvec_{n + 1})^2 \\
& = C + \E_{\inppre, x_{n + 1}} (\ridgeSkew^\top u_{n + 1} - \head^\top \valueM \Gpre \key^\top \query \token_{n + 1})^2 \\
& = C + \E_{\inppre, x_{n + 1}} (\ridgeSkew^\top u_{n + 1} - \head^\top \valueM \Gpre \key^\top {\query}_{:, 1:d} x_{n + 1})^2 \\
& = C + \E_{\inppre, x_{n + 1}} (\ridgeSkew^\top u_{n + 1} - \head^\top \valueM \Gpre \key^\top {\query}_{:, 1:d} \Sigma^{1/2} u_{n + 1})^2 \period
\end{align}
To finish the proof, observe that we can write
\begin{align}
\Gpre 
& = \sum_{i = 1}^n \colvec{x_i}{y_i} \colvec{x_i}{y_i}^\top \\
& = \begin{pmatrix}
\Sigma^{1/2} & 0 \\
0 & 1
\end{pmatrix}
\sum_{i = 1}^n \colvec{u_i}{y_i} \colvec{u_i}{y_i}^\top
\begin{pmatrix}
\Sigma^{1/2} & 0 \\
0 & 1
\end{pmatrix} \\
& = H \Gpre' H^\top \comma
\end{align}
where we have defined $H = \begin{pmatrix}
\Sigma^{1/2} & 0 \\
0 & 1
\end{pmatrix}$ and
\begin{align}
\Gpre' = \sum_{i = 1}^n \colvec{u_i}{y_i} \colvec{u_i}{y_i}^\top \period
\end{align}
Now, let us make the change of variables $\head = \head'$, $\valueM = \valueM' H^{-1}$, $\key = \key' H^{-1}$ and $\query = \query' H^{-1}$. Then, we obtain
\begin{align}
L(\key, & \query, \valueM, \head) \\
& = C + \E_{\inppre, x_{n + 1}} (\ridgeSkew^\top u_{n + 1} - \head^\top \valueM \Gpre \key^\top {\query}_{:, 1:d} \Sigma^{1/2} u_{n + 1})^2 \\
& = C + \E_{\inppre, x_{n + 1}} (\ridgeSkew^\top u_{n + 1} - (\head')^\top \valueM' H^{-1} H \Gpre' H H^{-1} (\key')^\top {\query'}_{:, 1:d} \Sigma^{-1/2} \Sigma^{1/2} u_{n + 1})^2 \\
& = C + \E_{\inppre, x_{n + 1}} (\ridgeSkew^\top u_{n + 1} - (\head')^\top \valueM' \Gpre' (\key')^\top {\query'}_{:, 1:d} u_{n + 1})^2 \period
\end{align}
In other words, this is equal to the loss obtained by the transformer corresponding to $\head', \valueM', \key', \query'$ when the data is from the distribution studied in \Cref{sec:main_result_linear}. The above proof also shows that the transformer corresponding to $\head, \valueM, \key, \query$ has the same output on $x_{n + 1}$, as the transformer corresponding to $\head', \valueM', \key', \query'$ does on $u_{n + 1}$. By \Cref{thm:main_thm}, the output of the latter transformer on $u_{n + 1}$ is
\begin{align}
(\eta U^\top \target)^\top u_{n + 1}
& = \eta \target^\top U u_{n + 1} \\
& = \eta \sum_{i = 1}^n y_i u_i^\top u_{n + 1} \\
& = \eta \sum_{i = 1}^n y_i (\Sigma^{-1/2} x_i)^\top \Sigma^{-1/2} x_{n + 1} \\
& = \eta \sum_{i = 1}^n y_i x_i^\top \Sigma^{-1} x_{n + 1} \comma
\end{align}
where the learning rate is 
\begin{align}
\eta 
& = \frac{\E_{\inppre \sim \seqdist}[\ridgeSkew^\top U^\top \target]}{\E_{\inppre \sim \seqdist}[\target^\top U U^\top \target]} \\
& = \frac{\E_{\inppre \sim \seqdist}[\target^\top X (X^\top X +\sigma^2 \Sigma)^{-1} \Sigma^{1/2} U^\top \target]}{\E_{\inppre \sim \seqdist}[\target^\top U U^\top \target]}
& \tag{By definition of $\ridgeSkew$} \\
& = \frac{\E_{\inppre \sim \seqdist}[\target^\top X (X^\top X +\sigma^2 \Sigma)^{-1} \Sigma^{1/2} \Sigma^{-1/2} X^\top \target]}{\E_{\inppre \sim \seqdist}[\target^\top X \Sigma^{-1/2} \Sigma^{-1/2} X^\top \target]}
& \tag{B.c. $U = X \Sigma^{-1/2}$} \\
& = \frac{\E_{\inppre \sim \seqdist}[\target^\top X (X^\top X +\sigma^2 \Sigma)^{-1} X^\top \target]}{\E_{\inppre \sim \seqdist}[\target^\top X \Sigma^{-1} X^\top \target]} \period
\end{align}
This completes the proof.
\end{proof}
\section{Missing Proofs from \Cref{sec:nonlinear}} \label{appendix:missing_proofs_nonlinear}

\begin{proof} [Proof of \Cref{lemma:nonlinear_eliminate_x}]
We proceed similarly to the proof of \Cref{lemma:eliminate_x}. For convenience, fix $\inppre$ and consider the function
\begin{align}
g(u)
& = \E_{x_{n + 1}, y_{n + 1}} [(u \cdot x_{n + 1} - y_{n + 1})^2 \mid \inppre] \period
\end{align}
Then, we have
\begin{align}
\nabla_u g(u)
& = \E_{x_{n + 1}, y_{n + 1}} [2(u \cdot x_{n + 1} - y_{n + 1}) x_{n + 1} \mid \inppre] \period
\end{align}
Therefore, if $\uminNL$ is the minimizer of $g(u)$ (note that $\uminNL$ depends on $\inppre$ but not on $x_{n + 1}$ and $y_{n + 1}$), then for any $u \in \R^d$, we have
\begin{align} \label{eq:nonlinear_gradient_zero}
\E_{x_{n + 1}, y_{n + 1}} [(\uminNL \cdot x_{n + 1} - y_{n + 1})(u \cdot x_{n + 1} - \uminNL \cdot x_{n + 1}) \mid \inppre]
 = (u - \uminNL) \cdot \nabla_u g(\uminNL) = 0 \comma
\end{align}
meaning that for any $u \in \R^d$, and some $C_{\inppre}$ which only depends on $\inppre$ and is independent of $u$,
\begin{align}
\E_{x_{n + 1}, y_{n + 1}} & [(u \cdot x_{n + 1} - y_{n + 1})^2 \mid \inppre] \\
& = \E_{x_{n + 1}, y_{n + 1}} [(u \cdot x_{n + 1} - \uminNL \cdot x_{n + 1} + \uminNL \cdot x_{n + 1} - y_{n + 1})^2 \mid \inppre] \\
& = \E_{x_{n + 1}, y_{n + 1}} [(u \cdot x_{n + 1} - \uminNL \cdot x_{n + 1})^2 \mid \inppre] + \E_{x_{n + 1}, y_{n + 1}} [(\uminNL \cdot x_{n + 1} - y_{n + 1})^2 \mid \inppre] \\
& \nextlinespace\nextlinespace + 2\E_{x_{n + 1}, y_{n + 1}} [(u \cdot x_{n + 1} - \uminNL \cdot x_{n + 1})(\uminNL \cdot x_{n + 1} - y_{n + 1}) \mid \inppre] \\
& = \E_{x_{n + 1}, y_{n + 1}} [(u \cdot x_{n + 1} - \uminNL \cdot x_{n + 1})^2 \mid \inppre] + \E_{x_{n + 1}, y_{n + 1}} [(\uminNL \cdot x_{n + 1} - y_{n + 1})^2 \mid \inppre]
& \tag{By \Cref{eq:nonlinear_gradient_zero}} \\
& = \|u - \uminNL\|_2^2 + C_{\inppre} \period \label{eq:nonlinear_any_weight_vector_simp}
\end{align}
Here the last equality is because $x_{n + 1} \sim \normal$, and because $\E_{x_{n + 1}, y_{n + 1}} [(\uminNL \cdot x_{n + 1} - y_{n + 1})^2 \mid \inppre]$ is a constant that depends on $\inppre$ but not on $u$. We can apply this manipulation to the loss function:
\begin{align}
L(\w, M)
& = \E_{\inppre \sim \seqdist} \Big[ \E_{x_{n + 1}, y_{n + 1}}[(y_{n + 1} - \hat{y}_{n + 1})^2 \mid \inppre] \Big] \\
& = \E_{\inppre \sim \seqdist} \Big[ \E_{x_{n + 1}, y_{n + 1}}[(y_{n + 1} - \w^\top \Gpre M \token_{n + 1})^2 \mid \inppre] \Big] \\
& = \E_{\inppre \sim \seqdist} \Big[ \E_{x_{n + 1}, y_{n + 1}}[(y_{n + 1} - \w^\top \Gpre M_{:, 1:d} x_{n + 1})^2 \mid \inppre] \Big] \\
& = \E_{\inppre \sim \seqdist} \Big[ \|\uminNL - M_{:, 1:d}^\top \Gpre \w\|_2^2 + C_{\inppre} \Big]
& \tag{By \Cref{eq:nonlinear_any_weight_vector_simp}} \\
& = \E_{\inppre \sim \seqdist} [\|\uminNL - M_{:, 1:d}^\top \Gpre \w \|_2^2] + C \period
\end{align}
Here, $C$ is a constant independent of $\w$ and $M$. This completes the proof of the lemma.
\end{proof}

\begin{proof} [Proof of \Cref{lemma:nonlinear_scalar_multiples}]
We prove this using \Cref{assumption:rotation_invariant}, imitating the proof of \Cref{lemma:linear_scalar_identity}. For convenience, let $M(X) = \E[\target \target^\top \mid X]$. Then, the $(i, j)$-th entry of $M(X)$ is $\E_f [f(x_i) f(x_j)] + \sigma^2$ if $i = j$ and $\E_f [f(x_i) f(x_j)]$ if $i \neq j$, since $\E[\epsilon_i] = 0$ and the $\epsilon_i$ are i.i.d. and independent of $X$. If we perform the change of variables $x_i \to Rx_i$ for a fixed rotation matrix $R$ and all $i \in [n]$, then by \Cref{assumption:rotation_invariant}, $\E_f[f(Rx_i) f(Rx_j)] = \E_f[f(x_i) f(x_j)]$, meaning that for any rotation matrix $R$,
\begin{align}
\E_X [X^\top M(X) X]
 = \E_X [(XR^\top)^\top M(XR^\top) (XR^\top)]
 = R \E_X[X^\top M(XR^\top) X] R^\top
 = R \E_X[X^\top M(X) X] R^\top
\end{align}
by the rotational invariance of the $x_i$. This implies that $\E_X [X^\top M(X) X] = \E_{\inppre} [X^\top \target \target^\top X]$ is a scalar multiple of the identity matrix.

Next, we consider $\E_{\inppre \sim \seqdist} [X^\top \target \uminNL^\top]$. For convenience, let $J(X) = \E_{\inppre \sim \seqdist} [\target \uminNL^\top \mid X]$. If we make the change of variables $x_i \to Rx_i$, then the joint distribution of $y_1, \ldots, y_n, y_{n + 1}$ does not change by \Cref{assumption:rotation_invariant}, meaning that $\uminNL$ is replaced by $R \uminNL$. Thus, we can conclude that $J(X)$ is equivariant to rotations of all the $x_i$ by $R$:
\begin{align} \label{eq:nonlinear_J_equiv}
J(XR^\top)
= \E[\target \uminNL^\top \mid XR^\top]
 = \E[\target \uminNL^\top R^\top \mid X] 
 = J(X) R^\top \period
\end{align}
Thus,
\begin{align}
\E_{\inppre \sim \seqdist} [X^\top \target \uminNL]
& = \E_X [X^\top J(X)] \\
& = \E_X[(XR^\top)^\top J(XR^\top)]
& \tag{By rotational invariance of $\normal$} \\
& = R \E_X[X^\top J(X)] R^\top
& \tag{By \Cref{eq:nonlinear_J_equiv}} \\
& = R \E_{\inppre \sim \seqdist}[X^\top \target \uminNL^\top] R^\top \period
\end{align}
Thus, $\E_{\inppre \sim \seqdist} [X^\top \target \uminNL^\top]$ is a scalar multiple of the identity matrix. The final statement of the lemma follows by taking the trace of the left and right hand sides.
\end{proof}

\begin{proof} [Proof of \Cref{lemma:nonlinear_replace_ridge_target}]
This follows by the same argument as \Cref{lemma:replace_ridge_target} --- here we use \Cref{lemma:get_rid_of_even_y_powers} in order to show that we only need to consider the lower left and upper right blocks of $\Gpre$, and the rest of the proof follows from linear algebraic manipulations and applying \Cref{lemma:nonlinear_scalar_multiples} (in place of \Cref{lemma:matrices_equal} which was used in the proof of \Cref{lemma:replace_ridge_target}).
\end{proof}

\begin{proof}[Proof of \Cref{thm:main_nonlinear_result}]
This follows directly from \Cref{lemma:nonlinear_replace_ridge_target}, since the effective linear predictor being $\eta X^\top \target$ is a necessary and sufficient condition for minimizing the pre-training loss.
\end{proof}

\end{document}